\newcommand{\thickhline}{%
	\noalign {\ifnum 0=`}\fi \hrule height 1pt 
	\futurelet \reserved@a \@xhline
}
\renewcommand{\arraystretch}{1.1}
\newtheorem{theo}{Theorem}
\newtheorem{lem}{Lemma}
\DeclareMathOperator*{\argmin}{arg\,min}
\useunder{\uline}{\ul}{}
\begin{document}
\title{Ensemble Modeling for Time Series Forecasting: an Adaptive Robust Optimization Approach}

\author{\name Dimitris Bertsimas \email dbertsim@mit.edu \\
       \addr Sloan School of Management and Operations Research Center\\
       Massachusetts Institute of Technology\\
       Cambridge, MA 02139, USA
       \AND
       \name Léonard Boussioux \email leobix@mit.edu \textit{(corresponding author)}\\
       \addr Operations Research Center\\
       Massachusetts Institute of Technology\\
       Cambridge, MA 02139, USA
       }

      

\editor{}

\maketitle

\begin{abstract}
    
Accurate time series forecasting is critical for a wide range of problems with temporal data. Ensemble modeling is a well-established technique for leveraging multiple predictive models to increase accuracy and robustness, as the performance of a single predictor can be highly variable due to shifts in the underlying data distribution. This paper proposes a new methodology for building robust ensembles of time series forecasting models.  Our approach utilizes Adaptive Robust Optimization (ARO) to construct a linear regression ensemble in which the models' weights can adapt over time. We demonstrate the effectiveness of our method through a series of synthetic experiments and real-world applications, including air pollution management, energy consumption forecasting, and tropical cyclone intensity forecasting. Our results show that our adaptive ensembles outperform the best ensemble member in hindsight by 16-26\% in root mean square error and 14-28\% in conditional value at risk and improve over competitive ensemble techniques. 
\end{abstract}

\begin{keywords}
  time series forecasting, ensemble modeling, regression, robust and adaptive optimization, sustainability
\end{keywords}

\section{Introduction\label{sec:Introduction}}

Time series data capture sequential measurements and observations indexed by timestamps. The unique characteristics of time series data, in which observations have a chronological order, often make their analysis challenging. In particular, predicting future target values through time series forecasting is a critical research topic in many real-world applications with a temporal component, such as healthcare operations, finance, energy, climate, and business. Organizations and professionals often use forecasting to inform their decision-making, anticipate uncertain scenarios, and take proactive measures.

Many popular forecasting methods exist, including auto-regressive processes \citep{Hamilton+2020, hyndman}, exponential smoothing approaches \citep{gardner}, and neural networks \citep{salinas, pedrolara}. Another common approach consists in converting a time series into a non-temporal format by concatenating several time steps of data to use classic machine learning models, such as gradient-boosted trees \citep{makridakis_m5_2022}. However, no forecasting method consistently outperforms others as modeling assumptions may have to change over time to account for the often complex mechanisms behind the time series generation \citep{chatfield}. As a result, the accuracy of each forecasting method can vary significantly across time depending on model drift \citep{gama}.

Ensemble modeling is a well-established technique to leverage the strengths and limitations of multiple models and benefit from their diversity \citep{dietterich_ensemble_2000, brown, pmlr-v39-oliveira14}. The principle is to combine the predictions of the forecasting models available to obtain a more accurate, stable, and robust predictor. This approach has gained widespread attention with a variety of methods, including the concepts of stacking \citep{wolpert}, bagging \citep{breiman1996}, or boosting \citep{schapire, Breiman1996BiasV, xgboost}. 

When applied to time series forecasting, ensemble modeling can become a complex task, as the temporal aspect of the data adds difficulty in determining which model is most suitable at any given point in time \citep{pmlr-v39-oliveira14, cerqueira_evaluating_2020}. A common approach is to weigh the ensemble members based on their performance in recent or historical observations, assuming the data distribution will stay similar and models behave consistently. Other approaches include meta-learning \citep{meta_prudencio, gastinger_study_2021} or regret minimization, for example, under the multi-armed bandit setting \citep{bianchi}.

However, these methods do not necessarily enforce the robustness of the ensembles, which can lead to critical prediction errors. In this study, we propose to explore ensemble modeling for time series forecasting from a different perspective, using robust optimization. We present a novel optimization scheme that dynamically selects the weights of an ensemble of forecasts at each time step, taking into account the uncertainty and errors of the underlying forecasts. Our approach is based on multi-stage robust optimization, where the weights are optimized variables determined at each time step in response to newly revealed information about the uncertain parameters.

Traditional optimization assumes that all optimization variables are \emph{``here and now''} decisions, i.e., we have to determine the values for these optimization variables now and cannot wait for more information on the uncertain parameters. However, in multi-stage (dynamic) decision problems, we can introduce \emph{``wait and see''} variables that can be determined after the uncertain parameter values have been revealed \citep{bertsimas_robust}. Although this modeling process is challenging to implement in practice, it offers powerful guarantees for robust dynamic decision-making under uncertainty. 

The Adaptive Robust Optimization (ARO) framework is a methodology to address multi-stage problems by adjusting wait-and-see variables as a function of uncertain parameters. By utilizing ARO, we aim to demonstrate the viability of creating a new branch of ensemble methods for temporal tasks, leveraging the strengths of adaptive robust methods. The results of our work showcase the exciting potential of ARO for applications in the field of machine learning.
Our contributions include:

\begin{enumerate}
    \item A novel ensemble method that leverages the ARO framework to dynamically adjust the weights of underlying models in real time, resulting in a robust and accurate forecasting tool. Our approach, based on linear regression ensembles, is detailed in Section \ref{sec:methodology}. 
    
    \item An analysis of the robustness guarantees of our method, accompanied by equivalent formulations for easy and tractable implementation, as outlined in Section \ref{sec:equivalence}.
    
    \item A comprehensive study of the impact of various hyperparameters on our adaptive ensemble method, as demonstrated through synthetic experiments in Section \ref{sec:synthetic}. 
    
    \item Empirical validation of our proposed ensemble method, showcasing its superiority over existing ensemble methods across multiple real-world forecasting applications, including wind speed forecasting for air pollution management, energy consumption forecasting, and tropical cyclone intensity forecasting (see Figure \ref{fig:ensemble_examples}). Our method outperforms the best ensemble member in hindsight by 16-26\% in terms of root mean square error and 14-28\% in conditional value at risk with a 15\% threshold, as demonstrated in Section \ref{sec:realworld}.
    
\end{enumerate}


\begin{figure}[h] 
\centering
    \includegraphics[width=0.88\linewidth]{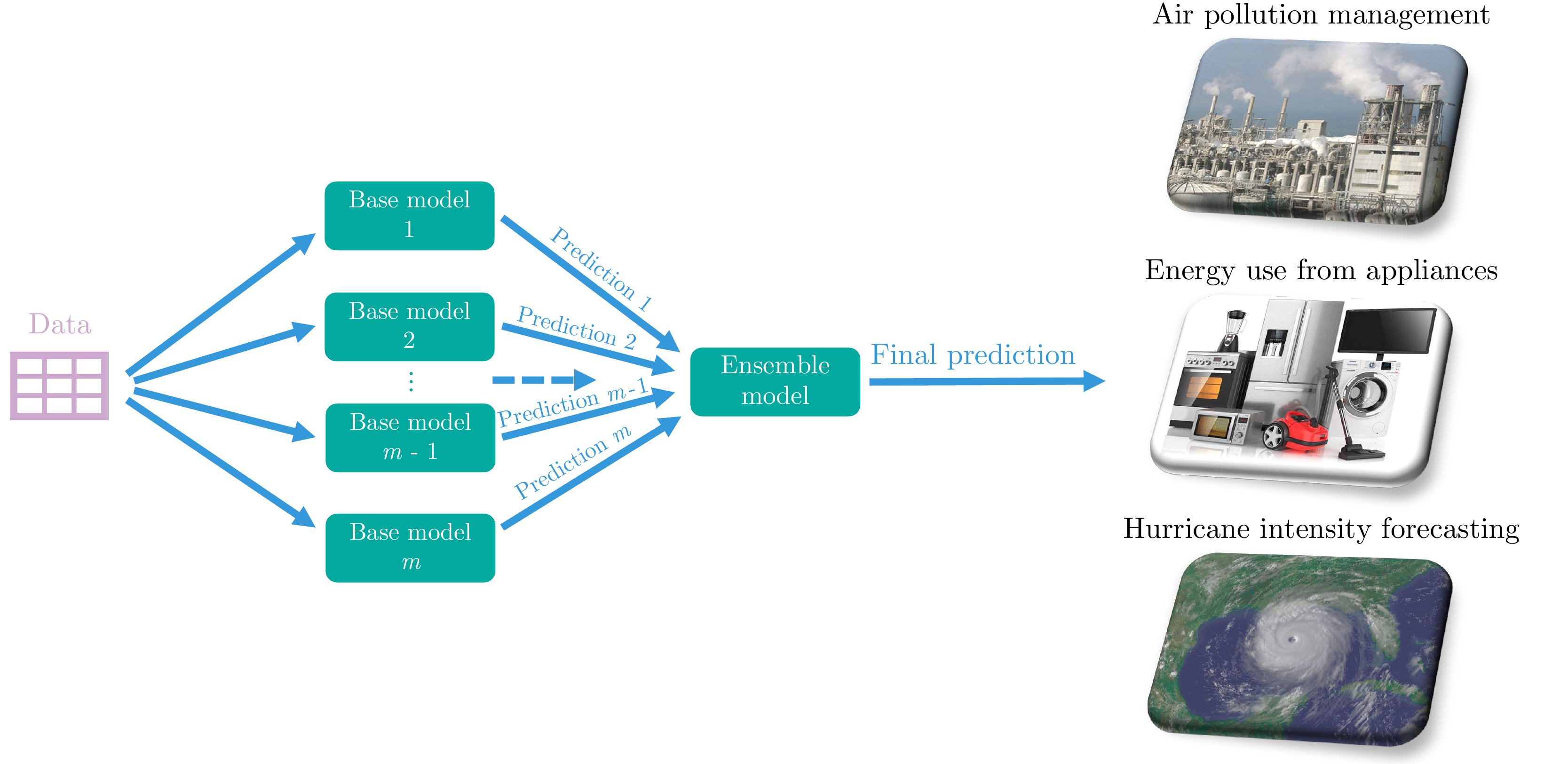}
    \caption{Schematic of the principle of ensemble modeling. Several base learners make predictions from the original data and are combined into an ensemble. We represent the three real-world data applications included in this paper.}
    \label{fig:ensemble_examples}
\end{figure}

\section{Background}\label{sec:background}

This section briefly reviews standard ensemble methods for time series forecasting before introducing the field of robust and adaptive optimization. 

\subsection{Ensemble Methods for Time Series Forecasting}

\paragraph*{Linear Ensembles} After the seminal work of \cite{bates}, several combination methodologies were added to the forecaster toolbox \citep{clemen_combining_1989, zou_combining_2004}. In particular, weighted linear combinations of different ensemble members became popular due to the straightforward implementation for real-world deployment. Some typical linear combination techniques include the simple average, trimmed average, winsorized average, or median of the ensemble members' forecasts. Other methodologies can weigh the models based on their past errors, respective performance, or variance. For instance, one can train a regularized linear regression such as Ridge \citep{ridge} or LASSO \citep{lasso} to ensemble the different forecasts. See \cite{litreview} for a thorough review of linear ensemble methods.

\paragraph*{Model Drift} In general, the weighted ensembles are often designed and trained using some historical observations and then deployed as fixed. 
While this is convenient for deployment purposes, this static setting may represent limitations when there is model drift, i.e., potential degradation of a model’s predictive power due to changes in the data. In this case, the ensemble members' performance can vary across time or different events due to ``data drift'' or ``concept drift''. Data drift refers to the model drift scenario when the properties of the independent variables' distribution change, for example, due to seasonality, a shift in consumer behaviors or company strategies, or unexpected events. On the other hand, concept drift corresponds to the scenario when the properties of the dependent variable change, which can happen because of changes in the definition of the target value, the annotation methodology, or the target sensor.

\paragraph*{Dynamic Ensemblers} Another common approach to ensemble forecasting consists in dynamically changing the weights of the different models in response to model drift \citep{kuncheva, gama}. In particular, the multi-armed bandit settings have been widely investigated \citep{bianchi}. Numerous methods with different assumptions and contexts exist to track the losses of the ensemble members and find regret guarantees of the ensemble with respect to the best model of the ensemble: for instance, the Exp3, Upper-Confidence Bound, Online Passive-Aggressive \citep{passive} algorithms (see \cite{lattimore} for a complete review).

These algorithms dynamically combine the forecasting models at each time step by considering the problem of minimizing the multi-armed bandit regret against the best ensemble member.

\subsection{Robust Optimization}

Robust Optimization (RO) seeks to immunize models and problem formulations from adversarial perturbations in the data by introducing, in general, an uncertainty set that captures the deterministic assumptions about these perturbations. The objective is to find solutions that are still good or feasible under a general level of uncertainty and are called \emph{robust} against errors of the specific magnitude and type chosen by the practitioner. 

RO has gained substantial traction in recent years due to the guarantees provided to properly designed formulations and the progress of techniques and software to solve min-max adversarial optimization formulations \citep{bertsimas_robust}.

\paragraph*{Formalism} We introduce some quick formalism to explicit the RO framework. We define a decision vector $\textbf{x}$ in a compact set of possible decisions $\mathcal{X}$ and an ensemble of parameters $\textbf{z} \in \mathbb{R}^{n_z}, {n_z} \geq 0$ representing the information available in the problem. Our optimization problem aims to minimize some objective function $f(\textbf{z}, \textbf{x})$. We may also specify a set of constraints $f_i(\textbf{z}, \textbf{x}) \leq 0, i \in [p]$, where we assume $f_i(\cdot, \textbf{x})$ is convex in $\textbf{x}$. Overall, we are interested in solving problems in this initial form:

\begin{align*}
\min_{\textbf{x} \in \mathcal{X}} \; &f(\textbf{z}, \textbf{x})\\
\text{s.t.  } \; &f_i(\textbf{z}, \textbf{x}) \leq 0, \quad \forall i \in [p].
\end{align*}

When $\textbf{z}$ is readily available, we can solve this problem with classical optimization methods. However, $\textbf{z}$ is uncertain and rarely explicitly known in real-world applications. The robust optimization approach consists in representing the set of possible values of $\textbf{z}$ with an uncertainty set $\mathcal{Z}$ and solving for the decision $\textbf{x}$ such that the constraints $f_i$ are always satisfied. We now optimize $f$ for the worst case, and the problem becomes:

\begin{align*}
\min_{\textbf{x} \in \mathcal{X}} \max_{\textbf{z} \in \mathcal{Z}} \; &f(\textbf{z}, \textbf{x})\\
\text{s.t.  } \; &f_i(\textbf{z}, \textbf{x}) \leq 0,  \quad \forall i \in [p], \quad \forall \textbf{z} \in \mathcal{Z}.
\end{align*}

The adversarial nature of such a problem can make it very difficult to solve to optimality. A key aspect of RO is to derive an equivalent reformulation of a robust problem with a computationally tractable form.

\subsection{Adaptive Robust Optimization}\label{sec:aro}

Adaptive robust optimization (ARO) expands the RO framework by separating the decision $\textbf{x}$ into multiple stages. The principle of a multi-stage problem is to contain adaptive ``wait-and-see'' decisions in addition to urgent ``here-and-now'' decisions.
With ARO, once the ``here-and-now'' decisions are made, we consider that some of the uncertain parameters will become known before determining the ``wait-and-see'' decisions.

This multi-stage setting encompasses many real-world scenarios, such as lot-sizing on a network, product inventory systems, unit commitment, and factory location problems, e.g., \citep{unitcommitment, lot}.

Formally, we denote the adaptive decisions as $\textbf{y}(\textbf{z})$ since they are selected only after some of the uncertain parameters $\textbf{z}$ may be revealed. An ARO problem typically writes as:

\begin{align*}
\min_{\textbf{x} \in \mathcal{X}, \textbf{y}(\textbf{z}) } \max_{\textbf{z} \in \mathcal{Z}} \; &f(\textbf{z}, \textbf{x}, \textbf{y}(\textbf{z}))\\
\text{s.t.  } \; &f_i(\textbf{z}, \textbf{x}, \textbf{y}(\textbf{z})) \leq 0,  \quad \forall i \in [p].
\end{align*}

Note that expressing $\textbf{y}$ as a function of $\textbf{z}$ allows the practitioner to choose the function $\textbf{y}(\textbf{z})$ arbitrarily before learning $\textbf{z}$, which is then called a \emph{decision rule}.

In general, problems that contain such constraints with decision rules are NP-hard \citep{ben-tal_adjustable_2004}. Therefore, to make them tractable, we may restrict $\textbf{y}(\textbf{z})$ to a given class of functions even though it could be sup-optimal. One such class is the \emph{affine decision rule}, which conveniently expresses $\textbf{y}(\textbf{z})$ with an affine relationship with $\textbf{z}$: $\textbf{y}(\textbf{z})= \textbf{x}_0 + \mathbf{V}\cdot \textbf{z}$, where the coefficients $\textbf{x}_0$ and $\mathbf{V}$ are to be determined and become ``here-and-now'' decision variables. Using decision rules is one of ARO's core ideas that will serve this paper's methodology.

\section{The Problem: a Robust Linear Regression for Time Series Forecasting}\label{sec:methodology}

We aim to formulate an adaptive version of a robust linear regression problem, such as Ridge or LASSO, in ensemble modeling for time series forecasting. We show how to develop an adaptive robust formulation that leverages the temporal aspect of the data and guarantees specific protection against model drift.

\subsection{Linear Regression for a Time Series Ensemble}

Define a sample $\mathbf{X}_t = [x_{t}^1, \dots, x_{t}^m] \in \mathbb{R}^m$ as the forecasts made by $m$ individual models at time $t$ for the same fixed lead time period. We call these $m$ models \emph{ensemble members}. We consider we are given a historical time series of ensemble members' predictions $(\mathbf{X},\mathbf{y}) \in (\mathbb{R}^{T\times m} \times \mathbb{R}^{T})$ with $T$ regularly spaced samples $\mathbf{X}_t \in \mathbb{R}^m$ with the corresponding ground-truth values $y_t \in \mathbb{R}$. 

Since we are interested in a dynamic linear combination of the forecasts, we associate all ensemble member predictions to time-varying coefficients $\boldsymbol{\beta}_t = [\beta_t^{1}, \dots, \beta_t^{m}] \in \mathbb{R}^m$. For compactness, we also define the vector of coefficients: $$\boldsymbol{\beta} := [\boldsymbol{\beta}_1| \dots| \boldsymbol{\beta}_T] = [\beta_1^{1}, \dots, \beta_1^{m},\dots,\beta_T^{1},\dots,\beta_T^{m}] \in \Omega \subset \mathbb{R}^{T \cdot m},$$ where $\Omega$ is a user-designed set of constraints on the coefficients, and $|$ is the concatenation operation. 

To build an adaptive weighted ensemble to approximate the ground truth values, we are interested in minimizing problems similar to the following ordinary least squares problem:

\begin{equation}\label{eq:ols}
    \begin{array}{ll}
    \displaystyle \min _{\boldsymbol{\beta} \in \Omega} & \displaystyle \sqrt{\sum_{t=1}^{T}\left(y_{t}-\mathbf{X}^\top_{t} \boldsymbol{\beta}_t\right)^2},\\
    
\end{array}
\end{equation}

or the least absolute deviation problem written as follows:

\begin{equation}\label{eq:lad}
    \begin{array}{ll}
    \displaystyle \min _{\boldsymbol{\beta} \in \Omega} & \displaystyle \sum_{t=1}^{T}\left| y_{t}-\mathbf{X}^\top_{t} \boldsymbol{\beta}_t\right|.\\
\end{array}
\end{equation}

Before proceeding, we make several remarks about the settings in this paper:
\begin{itemize}
\item We allow $\boldsymbol{\beta}_t$ to vary over time, as we want more flexibility to capture the fluctuations in the forecast skills. We will define potential constraints on $\boldsymbol{\beta} \in \Omega$ later.

\item We consider the complete observation of the forecasts and targets for a history of $T$ time steps. We assume access to historical data that can serve as training and validation sets.

\item We assume that each time step is regularly spaced. However, the lead time for prediction is not necessarily one time step. For example, we can consider 6 hours between each time step, and make forecasts for 24 hours later, i.e., for a given time step $t$, $y_t$ is the ground truth value in 24 hours ($t+4$), and $\mathbf{X}_t$ contains all the values forecasted for $t+4$ by each ensemble member.

\item After optimizing the above weights on a training set, the goal is to use the learned rules about $\boldsymbol{\beta}$ to make predictions in the ``future'', i.e., at time $T+1, T+2, \dots$.
\end{itemize}

For convenience and further developments, we rewrite problems (\ref{eq:ols}) and (\ref{eq:lad}) under a general compact form where the temporal aspect does not appear although it is still present in the underlying variables and parameters:

\begin{equation}\label{main_compact}
    \begin{array}{ll}
    \displaystyle \min _{ \boldsymbol{\beta} \in \Omega} & \displaystyle \left\|\mathbf{y}-\tilde{\mathbf{X}} \boldsymbol{\beta}\right\|,\\
\end{array}
\end{equation}

where $\left\|\cdot\right\|$ is a given norm, and $\tilde{\mathbf{X}}:=$ \( \left[\begin{array}{cccc}\mathbf{X}^\top_1 & \textbf{0} & \cdots & \boldsymbol{0}\\ \boldsymbol{0} & \mathbf{X}^\top_2  & & \vdots \\ \vdots &  & \ddots & \boldsymbol{0} \\ \boldsymbol{0} & \cdots & \boldsymbol{0} & \mathbf{X}^\top_T \end{array}\right] \in \mathbb{R}^{T\times T \cdot m}\).

\subsection{Robustification}

The nominal formulation \eqref{main_compact} is akin to the popular Sample Average Approximation (SAA) \citep{shapiro2003monte} minimizing the empirical error on the observed historical data. However, such formulation is prone to overfitting and poor out-of-sample performance \citep{smith2006optimizer}. This overfitting phenomenon typically originates in corruption of the data --- such as with noise or outliers --- or simply from the fact that the available data is finite, and the empirical error does not approximate precisely the out-of-sample error \citep{bennouna_holistic}. 
Hence, a natural approach to avoid overfitting is to robustify the nominal formulation \eqref{main_compact} using an uncertainty set of possible perturbations $\boldsymbol{\Delta} \in \mathcal{U}$ of the forecast matrix $\mathbf{\tilde{X}}$. We consider, therefore, the robust formulation:


\begin{equation}\label{RO1}
    \begin{array}{ll}
    \displaystyle \min _{ \boldsymbol{\beta} \in \Omega}\max_{\boldsymbol{\Delta} \in \mathcal{U}}  & \displaystyle \left\|\mathbf{y}-(\tilde{\mathbf{X}}+\boldsymbol{\Delta}) \boldsymbol{\beta}(\boldsymbol{\Delta})\right\|,\\
\end{array}
\end{equation}

where $\mathcal{U}$ is an uncertainty set that characterizes the user's belief about perturbations of the historic forecasts $\mathbf{X}$ at each time step $t$. Here, as explained in Section \ref{sec:aro}, the decision $\boldsymbol{\beta}$ becomes an adaptive variable $\boldsymbol{\beta}(\boldsymbol{\Delta})$ to the perturbation.

Using a proper choice of uncertainty set 
$\mathcal{U}$, the formulation (\ref{RO1}) accounts for ``adversarial noise'' in the forecast matrix $\mathbf{\tilde{X}}$ and seeks to protect the dynamic linear regression problem from structural uncertainty in the data.

Intuitively, we can understand that this uncertainty matrix encapsulates some of the errors that all forecasting models naturally make at each time step due to errors in the modeling process or the data.

\paragraph*{Example 1} \textbf{(Time-Varying Least Absolute Deviations)} Using $\ell_1$ norm and given uncertainty sets $\mathcal{U}_1,\ldots,\mathcal{U}_T$ associated with each time step, the compact problem (\ref{RO1}) rewrites more specifically as:

\begin{equation}\label{eq:example}
    \begin{array}{ll}
    \displaystyle \min _{\boldsymbol{\beta}_{1}, \ldots, \boldsymbol{\beta}_{T}} \max_{ \boldsymbol{\Delta}_1 \in \mathcal{U}_1, \ldots, \boldsymbol{\Delta}_T \in \mathcal{U}_T} & \displaystyle \sum_{t=1}^{T}\left|y_{t}-(\mathbf{X}_t+\boldsymbol{\Delta}_{t})^\top\boldsymbol{\beta}_t(\boldsymbol{\Delta}_1, \ldots, \boldsymbol{\Delta}_T)\right|.\\
\end{array}
\end{equation}

We will consider in the rest of the paper that $\boldsymbol{\beta}_{t}$ only depends on previous uncertainties $\boldsymbol{\Delta}_{1}, \ldots, \boldsymbol{\Delta}_{t-1}$ that have been revealed so far at time $t$. Therefore, the above example problem (\ref{eq:example}) would write as:

\begin{equation*}
    \begin{array}{ll}
    \displaystyle \min _{\boldsymbol{\beta}_{1}, \ldots, \boldsymbol{\beta}_{T}} \max_{ \boldsymbol{\Delta}_1 \in \mathcal{U}_1, \ldots, \boldsymbol{\Delta}_T \in \mathcal{U}_T} & \displaystyle \sum_{t=1}^{T}\left|y_{t}-(\mathbf{X}_t+\boldsymbol{\Delta}_{t})^\top\boldsymbol{\beta}_t(\boldsymbol{\Delta}_1, \ldots, \boldsymbol{\Delta}_{t-1})\right|.\\
\end{array}
\end{equation*}

With this consideration, we now proceed in the general case to derive decision rules for $\boldsymbol{\beta}_t(\boldsymbol{\Delta}_1, \ldots, \boldsymbol{\Delta}_{t-1})$.

\subsection{Adaptive Robust Formulation}

As explained in Section \ref{sec:aro}, the multi-stage problem (\ref{RO1}) can be cast in a more amenable formulation using an affine decision rule for $\boldsymbol{\beta}_t$ with respect to $\boldsymbol{\Delta}_1, \ldots, \boldsymbol{\Delta}_{t-1}$. 



We first decide on using a fixed window size of information $\tau \geq 1$ such that:

$$\boldsymbol{\beta}_t(\boldsymbol{\Delta}_1, \ldots, \boldsymbol{\Delta}_{t-1}) := \boldsymbol{\beta}_t(\boldsymbol{\Delta}_{t-\tau}, \ldots, \boldsymbol{\Delta}_{t-1}). $$


This fixed window of past information makes the problem more tractable and practical by restraining the parameter size and ensuring every $\boldsymbol{\beta}_t$ depends on the same number of historical time steps. Note that for the edge cases $\boldsymbol{\beta}_{\tau-1}, \dots, \boldsymbol{\beta}_{1}$, we define without loss of generality some extra values $\boldsymbol{\Delta}_{0}, \ldots, \boldsymbol{\Delta}_{-\tau+1}$.

We next consider that at time $t$, the uncertainties $(\boldsymbol{\Delta}_s)_{t>s\geq 1}$ have been observed as $(\widehat{\boldsymbol{\Delta}}_s)_{t>s\geq 1}$, and we consider the proxy that $(\widehat{\boldsymbol{\Delta}}_s)_{1 \leq s < t}$ followed: $\widehat{\boldsymbol{\Delta}}_s \sim \mathbf{X}_s - y_s \boldsymbol{e}$, i.e., we consider the forecast uncertainties corresponded to the previous forecast errors.

We now model $\boldsymbol{\beta}_t$ with an affine decision rule depending on the observed uncertainties: 

$$\boldsymbol{\beta}_t:=\boldsymbol{\beta}_0 + \mathbf{V}_0 \cdot \mathbf{Z}_t,$$

where we define the variables $\boldsymbol{\beta}_0 \in \mathbb{R}^m$ and $\mathbf{V}_0 \in \mathbb{R}^{m\times m\cdot \tau}$, and the vector of parameters:
 $$\mathbf{Z}_t := [\widehat{\boldsymbol{\Delta}}_{t-\tau}| \ldots| \widehat{\boldsymbol{\Delta}}_{t-1}] 
 = [\mathbf{X}_{t-\tau} -y_{t-\tau}\boldsymbol{e}| \ldots| \mathbf{X}_{t-1}-y_{t-1}\boldsymbol{e}] \in \mathbb{R}^{m\cdot \tau},$$ 
where $|$ indicates the concatenation operation and $\boldsymbol{e}$ the vector of 1s.



Overall, the proposed formulation adapts the coefficients at each time step, following an affine decision rule that can leverage the recent forecast error history.

\paragraph*{Example 1 (follow-up)} With this modeling and decision rule, the previous adaptive robust formulation (\ref{eq:example}), using $\ell_1$ norm and given uncertainty sets $\mathcal{U}_1,\ldots,\mathcal{U}_T$,  becomes:

\begin{equation}
    \begin{array}{ll}
    \displaystyle \min _{\boldsymbol{\beta}_{0}, \mathbf{V}_0} \max_{ \boldsymbol{\Delta}_1 \in \mathcal{U}_1, \ldots, \boldsymbol{\Delta}_T \in \mathcal{U}_T} & \displaystyle \sum_{t=1}^{T}\left|y_{t}-(\mathbf{X}_t+\boldsymbol{\Delta}_{t})^\top( \boldsymbol{\beta}_0 + \textbf{V}_0 \cdot \textbf{Z}_t)\right|.\\
\end{array}
\end{equation}

\paragraph*{Compact form of the Adaptive Robust Linear Ensemble} The rest of the paper now focuses on this general and more compact formulation:

\begin{equation}\label{ARO2}
    \begin{array}{ll}
    \displaystyle \min _{ \boldsymbol{\beta} \in \Omega_{\text{adapt}}}\max_{\boldsymbol{\Delta} \in \mathcal{U}}  & \displaystyle \left\|\mathbf{y}-(\tilde{\mathbf{X}}+\boldsymbol{\Delta}) \boldsymbol{\beta}\right\|,\\
\end{array}
\end{equation}

where $\Omega_{\text{adapt}}$ is defined as the set of all possible values for $\boldsymbol{\beta}$ that satisfy the affine decision rule: 

\[
\begin{aligned}
\Omega_{\text{adapt}} & := \Omega_{\text{adapt}}^{\mathbf{X}, \mathbf{y}, \tau} := \{\boldsymbol{\beta} \text{ subject to } \boldsymbol{\beta}_t =\boldsymbol{\beta}_0 + \mathbf{V}_0 \cdot \mathbf{Z}_t,  \boldsymbol{\beta}_0 \in \mathbb{R}^m,  \mathbf{V}_0 \in \mathbb{R}^{m\times m\cdot \tau}, \forall t \}, \text{where}\\
\mathbf{Z}_t & := [\mathbf{X}_{t-\tau} -y_{t-\tau}\boldsymbol{e}| \ldots| \mathbf{X}_{t-1}-y_{t-1}\boldsymbol{e}] \in \mathbb{R}^{m\cdot \tau}.
\end{aligned}
\]

Notice that $\Omega_{\text{adapt}}$ is parameterized by $\mathbf{X}, \mathbf{y}$, and $\tau$, but we omit this dependency in notations. 

\subsection{Equivalence to a Regularized Problem} \label{sec:equivalence}

Even with the affine decision rule, it is a priori unclear how to conveniently solve the compact robust optimization problem (\ref{ARO2}) due to its adversarial nature. Therefore, we now relate it to equivalent regularized regression problems for several relevant uncertainty sets. 

Indeed, in some cases, a robust formulation identifies the adversarial perturbations the model is protected against with an equivalent regularized problem \citep{bertsimas2014}. This is convenient for the practitioner as a min-max formulation can be converted to a minimization problem with a regularizer on the variables.

Let us consider a natural choice for the uncertainty set as:
$$\mathcal{U} = \{\boldsymbol{\Delta} \in \mathbb{R}^{T\times T\cdot m} : \left\| \boldsymbol{\Delta} \right\| \leq \lambda\},$$

where $\left\| \cdot \right\|$ is some matrix norm or seminorm, and $\lambda > 0$. We assume $\lambda$ is fixed for the remainder of the paper.

\paragraph*{Example 2: Adaptive Ridge} The choice of the 2-Frobenius norm $\left\| \boldsymbol{\Delta} \right\| := \left\| \boldsymbol{\Delta} \right\|_{F_2} = \sqrt{\sum_{i=1}^T\sum_{j=1}^{T\cdot m} \Delta_{ij}^2}$ can be interpreted as a Euclidean ball centered at the observed data $\mathbf{X}$ with radius $\lambda$, that models a global perturbation in the data. 

In this case, using the $\ell_2$ norm and $\mathcal{U}_{F_2}:=\{\boldsymbol{\Delta} \in \mathbb{R}^{T\times T\cdot m} : \left\| \boldsymbol{\Delta} \right\|_{F_2} \leq \lambda\}$, we can show that the problem:
\begin{equation}\label{ridge1}
    \begin{array}{ll}
    \displaystyle \min _{ \boldsymbol{\beta} \in \Omega_{\text{adapt}}}\max_{\boldsymbol{\Delta} \in \mathcal{U}_{F_2}}  & \displaystyle \left\|\mathbf{y}-(\tilde{\mathbf{X}}+\boldsymbol{\Delta}) \boldsymbol{\beta}\right\|_2\\
\end{array}
\end{equation}
is equivalent to the following problem, which we name \emph{Adaptive Ridge}:

\begin{equation}
    \begin{array}{ll}
    \displaystyle \min _{ \boldsymbol{\beta} \in \Omega_{\text{adapt}}} & \displaystyle \left\|\mathbf{y}-\tilde{\mathbf{X}} \boldsymbol{\beta}\right\|_2+\lambda \left\|\boldsymbol{\beta}\right\|_2.\\
\end{array}
\end{equation}

We now devise the more general equivalence properties in which this example falls. 

\begin{definition}
For two norms $g,h$, we define the induced-norm uncertainty set: $$\mathcal{U}_{(h,g)}:=\{\boldsymbol{\Delta} \in \mathbb{R}^{T \times T\cdot m} : \left\| \boldsymbol{\Delta} \right\|_{(h,g)} \leq \lambda\},$$ where $\displaystyle \left\| \boldsymbol{\Delta} \right\|_{(h,g)} = \max \frac{g(\boldsymbol{\Delta} \boldsymbol{\beta})}{h(\boldsymbol{\beta})}.$
\end{definition}

\begin{definition}
For $p \in [1,+\infty]$, we define the Frobenius uncertainty set:

$$\mathcal{U}_{F_p}:=\{\boldsymbol{\Delta} \in \mathbb{R}^{T\times T \cdot m} : \left\| \boldsymbol{\Delta} \right\|_{F_p} \leq \lambda\},$$

where the Frobenius norm corresponds to 
$$
\left\| \boldsymbol{\Delta} \right\|_{F_p} = \left(\sum_{i=1}^T\sum_{j=1}^{T\cdot m} \Delta_{ij}^p\right)^{\frac{1}{p}}.
$$
\end{definition}

Now, we can apply well-established equivalence results between robust min-max formulations and regularized regressions because we expressed our adaptive regression problems in a compact form.

\begin{theo}
\label{theo1}
For $p, q \in[1, \infty]$,
$$
\min _{\boldsymbol{\boldsymbol{\beta} \in \Omega_{\emph{adapt}}}} \max _{\boldsymbol{\Delta} \in \mathcal{U}_{(q, p)}}\|\mathbf{y}-(\tilde{\mathbf{X}}+\boldsymbol{\Delta}) \boldsymbol{\beta}\|_{p}=\min _{\boldsymbol{\boldsymbol{\beta} \in \Omega_{\emph{adapt}}}}\|\mathbf{y}-\tilde{\mathbf{X}} \boldsymbol{\beta}\|_{p}+\lambda\|\boldsymbol{\beta}\|_{q}.
$$
\end{theo}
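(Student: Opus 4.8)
The plan is to prove the identity pointwise in $\boldsymbol{\beta}$ and then minimize over $\Omega_{\text{adapt}}$. The key observation is that in the compact formulation \eqref{ARO2} the vector $\boldsymbol{\beta} \in \Omega_{\text{adapt}}$ is a \emph{fixed} quantity for the purposes of the inner maximization: the historical terms $\mathbf{Z}_t$ entering the affine decision rule depend only on the observed data $\mathbf{X}, \mathbf{y}$, not on $\boldsymbol{\Delta}$. Hence the adversary solves a standard worst-case perturbation problem for a given $\boldsymbol{\beta}$. Writing the residual $\mathbf{r} := \mathbf{y} - \tilde{\mathbf{X}}\boldsymbol{\beta}$ and noting $\mathbf{y} - (\tilde{\mathbf{X}} + \boldsymbol{\Delta})\boldsymbol{\beta} = \mathbf{r} - \boldsymbol{\Delta}\boldsymbol{\beta}$, it suffices to show, for every $\boldsymbol{\beta}$,
\[
\max_{\boldsymbol{\Delta} \in \mathcal{U}_{(q,p)}} \|\mathbf{r} - \boldsymbol{\Delta}\boldsymbol{\beta}\|_p = \|\mathbf{r}\|_p + \lambda\|\boldsymbol{\beta}\|_q ,
\]
after which taking the minimum of both sides over $\boldsymbol{\beta} \in \Omega_{\text{adapt}}$ gives the theorem. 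The affine-rule structure of $\Omega_{\text{adapt}}$ plays no role beyond restricting the outer minimization, so it can be carried along inertly.

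For the upper bound ($\leq$) I would use the triangle inequality $\|\mathbf{r} - \boldsymbol{\Delta}\boldsymbol{\beta}\|_p \leq \|\mathbf{r}\|_p + \|\boldsymbol{\Delta}\boldsymbol{\beta}\|_p$ together with the defining property of the induced norm, $\|\boldsymbol{\Delta}\|_{(q,p)} = \max_{\boldsymbol{\beta}' \neq 0} \|\boldsymbol{\Delta}\boldsymbol{\beta}'\|_p / \|\boldsymbol{\beta}'\|_q$, which yields $\|\boldsymbol{\Delta}\boldsymbol{\beta}\|_p \leq \|\boldsymbol{\Delta}\|_{(q,p)}\|\boldsymbol{\beta}\|_q \leq \lambda\|\boldsymbol{\beta}\|_q$ for every feasible $\boldsymbol{\Delta}$. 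Since this bound is uniform over $\mathcal{U}_{(q,p)}$, it bounds the maximum.

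The lower bound ($\geq$) is the crux: I would exhibit one feasible perturbation attaining the bound. Assuming $\mathbf{r} \neq \mathbf{0}$ (the case $\mathbf{r} = \mathbf{0}$ follows by aligning $\boldsymbol{\Delta}\boldsymbol{\beta}$ with any $\ell_p$-unit vector), set $\hat{\mathbf{r}} := \mathbf{r}/\|\mathbf{r}\|_p$ and let $\mathbf{a}$ be a Hölder-dual vector for $\boldsymbol{\beta}$, i.e. $\|\mathbf{a}\|_{q^\star} = 1$ and $\mathbf{a}^\top\boldsymbol{\beta} = \|\boldsymbol{\beta}\|_q$, where $q^\star$ is the conjugate exponent ($1/q + 1/q^\star = 1$). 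I then take the rank-one matrix $\boldsymbol{\Delta}^\star := -\lambda\,\hat{\mathbf{r}}\,\mathbf{a}^\top$ and verify two facts. Feasibility: $\|\boldsymbol{\Delta}^\star\|_{(q,p)} = \lambda\|\hat{\mathbf{r}}\|_p \max_{\boldsymbol{\beta}'}|\mathbf{a}^\top\boldsymbol{\beta}'|/\|\boldsymbol{\beta}'\|_q = \lambda \cdot 1 \cdot \|\mathbf{a}\|_{q^\star} = \lambda$. Optimality: $\boldsymbol{\Delta}^\star\boldsymbol{\beta} = -\lambda\|\boldsymbol{\beta}\|_q\,\hat{\mathbf{r}}$, so $\mathbf{r} - \boldsymbol{\Delta}^\star\boldsymbol{\beta} = (\|\mathbf{r}\|_p + \lambda\|\boldsymbol{\beta}\|_q)\hat{\mathbf{r}}$ has $\ell_p$-norm exactly $\|\mathbf{r}\|_p + \lambda\|\boldsymbol{\beta}\|_q$.

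The main obstacle is this $\geq$ direction, namely constructing a $\boldsymbol{\Delta}^\star$ that is \emph{simultaneously} feasible for the induced-norm ball and steers $\boldsymbol{\Delta}^\star\boldsymbol{\beta}$ into the residual direction $\hat{\mathbf{r}}$ that maximizes the perturbed norm. The enabling ingredient is the dual-norm characterization guaranteeing the Hölder-equality vector $\mathbf{a}$, which lets a single rank-one matrix realize the operator norm $\lambda$ exactly while achieving the desired alignment; for $p, q \in [1, \infty]$ the relevant supremum is attained, so no density or limiting argument is needed. A more economical route, rather than reconstructing this computation, is to observe that \eqref{ARO2} casts the adaptive problem into the precise form of the classical robustification--regularization equivalence for matrix regression and to invoke that result directly \citep{bertsimas2014}; I would still check that its hypotheses — a norm objective $\|\cdot\|_p$ and an induced-norm uncertainty set $\mathcal{U}_{(q,p)}$ — are met here.
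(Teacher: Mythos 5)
Your proposal is correct and follows essentially the same route as the paper: reduce to the pointwise identity $\max_{\boldsymbol{\Delta} \in \mathcal{U}_{(q,p)}}\|\mathbf{r}-\boldsymbol{\Delta}\boldsymbol{\beta}\|_p = \|\mathbf{r}\|_p + \lambda\|\boldsymbol{\beta}\|_q$, obtain the upper bound from the triangle inequality and the induced-norm definition, and attain it with a rank-one perturbation built from a dual-norm-achieving vector aligned with the residual direction, before passing the equality through the outer minimum over $\Omega_{\text{adapt}}$ (the paper does exactly this via the lemma of \cite{bertsimas2014}, whose proof it reproduces). The only gap is that your $\mathbf{r}=\mathbf{0}$ branch is merely sketched; the paper handles it explicitly with the construction $\widehat{\boldsymbol{\Delta}} = \lambda\,\mathbf{u}\,\mathbf{v}^\top$ for a unit vector $\mathbf{u}$ and the reverse triangle inequality, which is the routine fix you allude to.
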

In particular, for $p=q=2$, we recover what we call ``Adaptive Ridge'' in this paper as a robustification; likewise, for $p=2$ and $q=1$, we recover what we call the Adaptive Lasso.

\begin{proof}
The result follows from the lemma below: 

\begin{lem} \cite{bertsimas2014}\label{lemma4}
If $g: \mathbb{R}^{n} \rightarrow \mathbb{R}$ is a seminorm which is not identically zero and $h: \mathbb{R}^{n} \rightarrow \mathbb{R}$ is a norm, then for any $\mathbf{z} \in \mathbb{R}^{m}$ and $\boldsymbol{\beta} \in \mathbb{R}^{n}$
$$
\max _{\boldsymbol{\Delta} \in \mathcal{U}_{(h, g)}} g(\mathbf{z}+\boldsymbol{\Delta} \boldsymbol{\beta})=g(\mathbf{z})+\lambda h(\boldsymbol{\beta}),
$$
where $\mathcal{U}_{(h, g)}=\left\{\boldsymbol{\Delta}:\|\boldsymbol{\Delta}\|_{(h, g)} \leq \lambda\right\}$.

\end{lem}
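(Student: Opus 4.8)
The plan is to prove the identity by establishing the two inequalities ``$\leq$'' and ``$\geq$'' separately, since the left-hand side is a maximization over the uncertainty set while the right-hand side is an explicit closed form. Throughout I let $\boldsymbol{\Delta}$ range over $m \times n$ matrices, so that $\boldsymbol{\Delta}\boldsymbol{\beta}$ lives in the same space as $\mathbf{z}$, on which the seminorm $g$ acts, while the norm $h$ acts on the coefficient vector $\boldsymbol{\beta}$. For the easy direction ``$\leq$'' I would fix any feasible $\boldsymbol{\Delta}$ with $\|\boldsymbol{\Delta}\|_{(h,g)} \leq \lambda$ and apply the triangle inequality for the seminorm, $g(\mathbf{z} + \boldsymbol{\Delta}\boldsymbol{\beta}) \leq g(\mathbf{z}) + g(\boldsymbol{\Delta}\boldsymbol{\beta})$. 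By the very definition of the induced seminorm $\|\boldsymbol{\Delta}\|_{(h,g)} = \max_{\boldsymbol{\beta}'} g(\boldsymbol{\Delta}\boldsymbol{\beta}')/h(\boldsymbol{\beta}')$, the second term obeys $g(\boldsymbol{\Delta}\boldsymbol{\beta}) \leq \|\boldsymbol{\Delta}\|_{(h,g)}\, h(\boldsymbol{\beta}) \leq \lambda h(\boldsymbol{\beta})$, and taking the maximum over feasible $\boldsymbol{\Delta}$ yields $\max_{\boldsymbol{\Delta}} g(\mathbf{z} + \boldsymbol{\Delta}\boldsymbol{\beta}) \leq g(\mathbf{z}) + \lambda h(\boldsymbol{\beta})$.

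For the reverse direction ``$\geq$'' I would exhibit a single feasible perturbation attaining the bound, taken as the rank-one matrix $\boldsymbol{\Delta}^{*} = \lambda\, \mathbf{u}\mathbf{v}^{\top}$. Here $\mathbf{v}$ is chosen as a maximizer in the dual representation $h(\boldsymbol{\beta}) = \max_{h^{*}(\mathbf{v}) \leq 1} \mathbf{v}^{\top}\boldsymbol{\beta}$, so that $\mathbf{v}^{\top}\boldsymbol{\beta} = h(\boldsymbol{\beta})$ and $h^{*}(\mathbf{v}) \leq 1$; this maximizer exists by compactness of the dual unit ball in finite dimension. The vector $\mathbf{u}$ is chosen to be a unit direction in which $g$ adds up along $\mathbf{z}$: concretely $\mathbf{u} = \mathbf{z}/g(\mathbf{z})$ when $g(\mathbf{z}) > 0$, which gives $g(\mathbf{z} + \alpha\mathbf{u}) = g(\mathbf{z}) + \alpha$ for all $\alpha \geq 0$, and any $\mathbf{u}$ with $g(\mathbf{u}) = 1$ when $g(\mathbf{z}) = 0$ (such a $\mathbf{u}$ exists because $g$ is not identically zero, and then $g(\mathbf{z} + \alpha\mathbf{u}) = \alpha$ follows by squeezing the two seminorm triangle inequalities). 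A direct computation of the induced norm of a rank-one matrix gives $\|\boldsymbol{\Delta}^{*}\|_{(h,g)} = \lambda\, g(\mathbf{u})\, h^{*}(\mathbf{v}) = \lambda h^{*}(\mathbf{v}) \leq \lambda$, so $\boldsymbol{\Delta}^{*}$ is feasible; substituting and using $\boldsymbol{\Delta}^{*}\boldsymbol{\beta} = \lambda (\mathbf{v}^{\top}\boldsymbol{\beta})\mathbf{u} = \lambda h(\boldsymbol{\beta})\mathbf{u}$ yields $g(\mathbf{z} + \boldsymbol{\Delta}^{*}\boldsymbol{\beta}) = g(\mathbf{z}) + \lambda h(\boldsymbol{\beta})$ by additivity of $\mathbf{u}$ with $\alpha = \lambda h(\boldsymbol{\beta}) \geq 0$. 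Combining the two inequalities closes the argument.

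The main obstacle is precisely this ``$\geq$'' direction: one must make the two inequalities from the upper bound tight \emph{simultaneously} with a single feasible $\boldsymbol{\Delta}$. The rank-one ansatz is what decouples the two requirements --- $\mathbf{v}$ saturates the induced-norm bound so that $g(\boldsymbol{\Delta}^{*}\boldsymbol{\beta}) = \lambda h(\boldsymbol{\beta})$, while $\mathbf{u}$ saturates the seminorm triangle inequality so that $g(\mathbf{z} + \boldsymbol{\Delta}^{*}\boldsymbol{\beta}) = g(\mathbf{z}) + g(\boldsymbol{\Delta}^{*}\boldsymbol{\beta})$ --- and the key technical step is verifying the rank-one identity $\|\mathbf{u}\mathbf{v}^{\top}\|_{(h,g)} = g(\mathbf{u})\, h^{*}(\mathbf{v})$ using absolute homogeneity of $g$. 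The remaining bookkeeping is the separate handling of $g(\mathbf{z}) = 0$, where the canonical additive direction $\mathbf{z}/g(\mathbf{z})$ is undefined and must be replaced by an arbitrary unit-$g$ direction, and the degenerate case $\boldsymbol{\beta} = 0$, where $h(\boldsymbol{\beta}) = 0$ makes both sides equal to $g(\mathbf{z})$ trivially. Since all spaces are finite-dimensional, no completeness or attainment issue arises beyond the compactness already invoked for $\mathbf{v}$.
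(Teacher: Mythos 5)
Your proof is correct and follows essentially the same route as the paper's: the triangle inequality plus the induced-norm definition for the upper bound, and a rank-one perturbation $\lambda\,\mathbf{u}\mathbf{v}^{\top}$ built from a dual maximizer $\mathbf{v}$ for the attainment direction, with the same case split on $g(\mathbf{z})=0$. Your only (cosmetic) departure is writing both cases uniformly as $\lambda\,\mathbf{u}\mathbf{v}^{\top}$ with $\mathbf{u}=\mathbf{z}/g(\mathbf{z})$ or a unit-$g$ vector, and packaging feasibility as the identity $\|\mathbf{u}\mathbf{v}^{\top}\|_{(h,g)}=g(\mathbf{u})\,h^{*}(\mathbf{v})$, which is the same computation the paper performs directly.
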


We reproduce and adapt the proof of the lemma for completeness. 

The triangle inequality directly gives the first side of the equality: $$g(\mathbf{z}+\boldsymbol{\Delta} \boldsymbol{\beta}) \leq g(\mathbf{z})+g(\boldsymbol{\Delta} \boldsymbol{\beta}) \leq g(\mathbf{z})+\lambda h(\boldsymbol{\beta}) \text{ for any } \boldsymbol{\Delta} \in \mathcal{U}:=\mathcal{U}_{(h, g)}.$$ We next show that there exists some $\boldsymbol{\Delta} \in \mathcal{U}$ so that $g(\mathbf{z}+\boldsymbol{\Delta} \boldsymbol{\beta})=g(\mathbf{z})+\lambda h(\boldsymbol{\beta})$. Let $\mathbf{v} \in \mathbb{R}^{n}$ so that $\mathbf{v} \in \operatorname{argmax}_{h^{*}(\mathbf{v})=1} \boldsymbol{\beta}^\top \textbf{v}$, where $h^{*}$ is the dual norm of $h$. Note in particular that $\boldsymbol{\beta}^\top \textbf{v}=h(\boldsymbol{\beta})$ by the definition of the dual norm $h^{*}$. 
For now, suppose that $g(\mathbf{z}) \neq 0$. Define the rank one matrix $\widehat{\boldsymbol{\Delta}}:=\frac{\lambda}{g(\mathbf{z})} \mathbf{z} \mathbf{v}^{\top}$. We observe that
$$
g(\mathbf{z}+\widehat{\boldsymbol{\Delta}} \boldsymbol{\beta})=g\left(\mathbf{z}+\frac{\lambda h(\boldsymbol{\beta})}{g(\mathbf{z})} \mathbf{z}\right)=\frac{g(\mathbf{z})+\lambda h(\boldsymbol{\beta})}{g(\mathbf{z})} g(\mathbf{z})=g(\mathbf{z})+\lambda h(\boldsymbol{\beta}).
$$
We next show that $\widehat{\boldsymbol{\Delta}} \in \mathcal{U}$. We remark that for any $\mathbf{x} \in \mathbb{R}^{n}$,
$$
g(\widehat{\mathbf{\Delta}} \mathbf{x})=g\left(\frac{\lambda \mathbf{v}^{\top} \mathbf{x}}{g(\mathbf{z})} \mathbf{z}\right)=\lambda\left|\mathbf{v}^{\top} \mathbf{x}\right| \leq \lambda h(\mathbf{x}) h^{*}(\mathbf{v})=\lambda h(\mathbf{x}),
$$
where the final inequality follows by definition of the dual norm. Hence $\widehat{\boldsymbol{\Delta}} \in \mathcal{U}$, as desired.

We now consider the case when $g(\mathbf{z})=0$. \\
Let $\mathbf{u} \in \mathbb{R}^{n}$ so that $g(\mathbf{u})=1$. Since $g$ is not identically zero, there exists some $\mathbf{u}$ such that $g(\mathbf{u})>0$. Therefore, by homogeneity of $g$ we can take $\mathbf{u}$ so that $g(\mathbf{u})=1$. 

Let $\mathbf{v}$ be as before and define $\widehat{\boldsymbol{\Delta}}:=\lambda \mathbf{u v}^{\top}$. We observe that
$$
g(\mathbf{z}+\widehat{\boldsymbol{\Delta}} \boldsymbol{\beta})=g\left(\mathbf{z}+\lambda \mathbf{u} \mathbf{v}^{\top} \boldsymbol{\beta}\right) \leq g(\mathbf{z})+\lambda\left|\mathbf{v}^{\top} \boldsymbol{\beta}\right| g(\mathbf{u})= 0 + \lambda\left|\mathbf{v}^{\top} \boldsymbol{\beta}\right| = \lambda h(\boldsymbol{\beta}).
$$
Now, by the reverse triangle inequality,
$$
g(\mathbf{z}+\widehat{\boldsymbol{\Delta}} \boldsymbol{\beta}) \geq g(\widehat{\boldsymbol{\Delta}} \boldsymbol{\beta})-g(\mathbf{z})=g(\widehat{\boldsymbol{\Delta}} \boldsymbol{\beta})=\lambda h(\boldsymbol{\beta})
$$
and therefore $g(\mathbf{z}+\widehat{\boldsymbol{\Delta}} \boldsymbol{\beta})=\lambda h(\boldsymbol{\beta})=g(\mathbf{z})+\lambda h(\boldsymbol{\beta})$. The proof that $\widehat{\boldsymbol{\Delta}} \in \mathcal{U}$ is identical to the case when $g(\mathbf{z}) \neq 0$. This completes the proof of the lemma.

Theorem \ref{theo1} follows by applying this Lemma \ref{lemma4} with the norm $q$ as $g$ and the norm $p$ as $h$ and passing the equality to the min.

\end{proof}


We also propose the following equivalence result with another norm:
\setcounter{theorem}{0}
\begin{corollary}\label{theo2}
For any $p \in[1, \infty]$:
$$
\min _{\boldsymbol{\boldsymbol{\beta} \in \Omega_{\emph{adapt}}}} \max _{\boldsymbol{\Delta} \in \mathcal{U}_{F_{p}}}\|\mathbf{y}-(\tilde{\mathbf{X}}+\boldsymbol{\Delta}) \boldsymbol{\beta}\|_{p}=\min _{\boldsymbol{\boldsymbol{\beta} \in \Omega_{\emph{adapt}}}}\|\mathbf{y}-\tilde{\mathbf{X}} \boldsymbol{\beta}\|_{p}+\lambda\|\boldsymbol{\beta}\|_{p^{*}},
$$
where $ \frac{1}{p} + \frac{1}{p^{*}} = 1$ and $\mathcal{U}_{F_p}:=\{\boldsymbol{\Delta}  : \left\| \boldsymbol{\Delta} \right\|_{F_p} \leq \lambda\}$.
\end{corollary}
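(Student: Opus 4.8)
The plan is to reduce the corollary to the same inner maximization that drives Theorem~\ref{theo1}, but while tracking carefully that the Frobenius $p$-ball $\mathcal{U}_{F_p}$ is \emph{strictly smaller} than the induced-norm ball appearing in Lemma~\ref{lemma4}. Concretely, it suffices to prove, for every fixed $\boldsymbol{\beta}$, the inner identity
\[
\max_{\boldsymbol{\Delta} \in \mathcal{U}_{F_p}} \|\mathbf{y} - (\tilde{\mathbf{X}} + \boldsymbol{\Delta})\boldsymbol{\beta}\|_p = \|\mathbf{y} - \tilde{\mathbf{X}}\boldsymbol{\beta}\|_p + \lambda\|\boldsymbol{\beta}\|_{p^*},
\]
and then pass the equality through the minimization over $\boldsymbol{\beta} \in \Omega_{\text{adapt}}$, exactly as in the last line of the proof of Theorem~\ref{theo1}. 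Throughout, write $\mathbf{z} := \mathbf{y} - \tilde{\mathbf{X}}\boldsymbol{\beta}$ for the nominal residual.

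For the $\leq$ direction I would first bound $\|\boldsymbol{\Delta}\boldsymbol{\beta}\|_p$ in terms of $\|\boldsymbol{\Delta}\|_{F_p}$. Writing $\boldsymbol{\Delta}$ row-wise and applying H\"older's inequality to each coordinate of $\boldsymbol{\Delta}\boldsymbol{\beta}$ with conjugate exponents $p$ and $p^*$ gives $\|\boldsymbol{\Delta}\boldsymbol{\beta}\|_p \leq \|\boldsymbol{\Delta}\|_{F_p}\|\boldsymbol{\beta}\|_{p^*}$; equivalently $\|\boldsymbol{\Delta}\|_{(p^*,p)} \leq \|\boldsymbol{\Delta}\|_{F_p}$, so that $\mathcal{U}_{F_p} \subseteq \mathcal{U}_{(p^*,p)}$. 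Combining the triangle inequality with this bound yields $\|\mathbf{y} - (\tilde{\mathbf{X}} + \boldsymbol{\Delta})\boldsymbol{\beta}\|_p \leq \|\mathbf{z}\|_p + \lambda\|\boldsymbol{\beta}\|_{p^*}$ for every $\boldsymbol{\Delta} \in \mathcal{U}_{F_p}$, which establishes the upper bound.

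The reverse inequality is where the work lies, since the containment above is strict in general and so I cannot simply invoke Lemma~\ref{lemma4} as a black box on $\mathcal{U}_{F_p}$. Instead I would reuse the \emph{extremal} perturbation from the proof of Lemma~\ref{lemma4} under the identifications $g = \|\cdot\|_p$ and $h = \|\cdot\|_{p^*}$; since the dual of the $\ell_{p^*}$ norm is the $\ell_p$ norm, the maximizing $\mathbf{v}$ satisfies $\|\mathbf{v}\|_p = 1$ and $\boldsymbol{\beta}^\top\mathbf{v} = \|\boldsymbol{\beta}\|_{p^*}$. When $\mathbf{z} \neq 0$ the extremizer is the rank-one matrix $\widehat{\boldsymbol{\Delta}} = \tfrac{\lambda}{\|\mathbf{z}\|_p}\,\mathbf{z}\mathbf{v}^\top$, which already attains $\|\mathbf{z} + \widehat{\boldsymbol{\Delta}}\boldsymbol{\beta}\|_p = \|\mathbf{z}\|_p + \lambda\|\boldsymbol{\beta}\|_{p^*}$. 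The crucial new check is that $\widehat{\boldsymbol{\Delta}}$ is feasible for the \emph{smaller} set: because $\widehat{\boldsymbol{\Delta}}$ is an outer product, its entries factor as $\widehat{\Delta}_{ij} = \tfrac{\lambda}{\|\mathbf{z}\|_p} z_i v_j$, so the double sum defining $\|\widehat{\boldsymbol{\Delta}}\|_{F_p}^p$ separates into $\tfrac{\lambda^p}{\|\mathbf{z}\|_p^p}\|\mathbf{z}\|_p^p\|\mathbf{v}\|_p^p = \lambda^p$, whence $\|\widehat{\boldsymbol{\Delta}}\|_{F_p} = \lambda$ and $\widehat{\boldsymbol{\Delta}} \in \mathcal{U}_{F_p}$. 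The degenerate case $\mathbf{z} = 0$ is handled identically using $\widehat{\boldsymbol{\Delta}} = \lambda\,\mathbf{u}\mathbf{v}^\top$ with $\|\mathbf{u}\|_p = 1$, whose Frobenius $p$-norm is again exactly $\lambda$.

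I expect the main obstacle to be precisely this feasibility verification: one must recognize that the worst-case perturbation supplied by Lemma~\ref{lemma4} is rank one, that for an outer product the Frobenius $p$-norm factorizes as the product of the two vectors' $\ell_p$ norms, and that the dual-norm normalization $(\ell_{p^*})^* = \ell_p$ forces $\|\mathbf{v}\|_p = 1$. These three facts together make the extremizer land exactly on the boundary of $\mathcal{U}_{F_p}$, so that restricting from $\mathcal{U}_{(p^*,p)}$ to the strictly smaller $\mathcal{U}_{F_p}$ leaves the robust value unchanged. Sandwiching the two bounds and minimizing over $\boldsymbol{\beta} \in \Omega_{\text{adapt}}$ then completes the argument for every $p \in [1,\infty]$, the endpoints included.
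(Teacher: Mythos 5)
Your proof is correct, but it takes a genuinely different route from the paper. The paper establishes the corollary by invoking a general theorem of \cite{xu_robust_2008} on robust regression with feature-wise coupled uncertainty sets, identifying $\mathcal{U}_{F_p}$ with the choice $\mathcal{U}' = \{(\boldsymbol{\delta}_1,\ldots,\boldsymbol{\delta}_m) : \|(\|\boldsymbol{\delta}_1\|_p,\ldots,\|\boldsymbol{\delta}_m\|_p)\|_p \leq \lambda\}$ and leaving the evaluation of the resulting regularizer $v(\boldsymbol{\lambda},\boldsymbol{\kappa},\boldsymbol{\beta})$ implicit. You instead give a direct, self-contained sandwich argument: H\"older yields $\|\boldsymbol{\Delta}\boldsymbol{\beta}\|_p \leq \|\boldsymbol{\Delta}\|_{F_p}\|\boldsymbol{\beta}\|_{p^*}$, hence $\mathcal{U}_{F_p}\subseteq \mathcal{U}_{(p^*,p)}$ and the upper bound; and the rank-one extremizer $\widehat{\boldsymbol{\Delta}}$ from Lemma \ref{lemma4} (with $g=\ell_p$, $h=\ell_{p^*}$, $h^*=\ell_p$) satisfies $\|\widehat{\boldsymbol{\Delta}}\|_{F_p}=\lambda$ because the Frobenius $p$-norm of an outer product factorizes as $\|\mathbf{z}\|_p\|\mathbf{v}\|_p$, so it remains feasible for the smaller set and the worst-case value is unchanged. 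The key structural observation --- that the induced $(p^*\!\to p)$ norm and the Frobenius $p$-norm coincide on rank-one matrices, so the extremizer lands exactly on the boundary of $\mathcal{U}_{F_p}$ --- is exactly what makes the restriction harmless, and your argument makes this transparent where the paper's citation-based proof does not. Your approach is more elementary and arguably more informative (it explains why two different uncertainty sets produce the same regularized counterpart); the paper's approach buys generality, since the cited theorem covers arbitrary convex couplings of the column norms beyond the single $\ell_p$ constraint. The only points worth tightening are cosmetic: at $p=\infty$ the sum defining $\|\cdot\|_{F_p}$ must be read as a maximum, and the sign of the perturbation should be flipped to match the residual $\mathbf{z}-\boldsymbol{\Delta}\boldsymbol{\beta}$, neither of which affects the argument since $\mathcal{U}_{F_p}$ is symmetric.
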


\begin{proof} We obtain the result as a corollary from the theorem below:

\setcounter{theorem}{1}
\begin{theo} \cite{xu_robust_2008}

Let $p \geq 1$. Define $\displaystyle \quad \mathcal{U}^{\prime} := \left\{ (\boldsymbol{\delta}_{1},\ldots, \boldsymbol{\delta}_{m}) | \quad f_{j}(\|\boldsymbol{\delta}_{1}\|_{p},\ldots, \|\boldsymbol{\delta}_{m}\|_{p}) \leq 0; j=1, \ldots, k \right\} $ where $f_j(\cdot)$ are convex functions and $k, m \geq 1$.

If the set ${\cal Z}:= \{{\bf z}\in \mathbb{R}^{m}\vert f_{j}({\bf z})\leq 0, j=1,\ldots, k; {\bf z}\geq {\bf 0}\}$ has a non-empty relative interior, then the robust regression problem $$\displaystyle \min _{{\boldsymbol{\beta}}\in \mathbb{R} ^{m}} \left \{\max _{\boldsymbol{\Delta} \in{\cal U}^{\prime}}\left\| \bf{y}-(\bf{X}+\boldsymbol{\Delta}) \boldsymbol{\beta}\right\|_p\right \}$$

is equivalent to the following regularized regression problem:
$$
\min \limits_{\boldsymbol{\lambda}\in \mathbb{R}^{k}_{+},\boldsymbol{\kappa}\in \mathbb{R}^{m}_{+}, \boldsymbol{\beta}\in \mathbb{R}^{m}}\left\{\left\| {\bf y}-\bf{X} \boldsymbol{\beta}\right\|_p+v(\boldsymbol{\lambda}, \boldsymbol{\kappa}, \boldsymbol{\beta})\right\},$$

where $v(\boldsymbol{\lambda}, \boldsymbol{\kappa}, \boldsymbol{\beta}):= \max_{{\bf c}\in \mathbb{R}^{m}}\left[(\boldsymbol{\kappa} +|\boldsymbol{\beta}|)^{\top} {\bf c}-\sum _{j=1}^{k}\boldsymbol{\lambda}_{j} f_{j}({\bf c})\right].$

\end{theo}

By taking $${\cal U}^{\prime}={\cal U}_{F_p}=\left \{\left.(\boldsymbol{\delta }_{1},\ldots,\boldsymbol{\delta}_{m})\right\vert \Vert \left \Vert \boldsymbol{\delta }_{1}\right\Vert_{p},\ldots,\Vert \boldsymbol{\delta }_{m}\Vert _{p}\Vert _{p}\leq \lambda \right \}$$ and constraining $\boldsymbol{\beta} \in \Omega_{\text{adapt}}$, we recover our Corollary \ref{theo2}.
\end{proof}

\subsection{Predictions in Real-Time}

The adaptive ensemble method requires training and validation before deployment. We recommend separating the available forecast data into training and validation sets to select the two hyperparameters: the regularization factor $\lambda$ and the window size of past data $\tau$ to include in the affine decision rule. Note that there must be no sample overlap between the data used to train the ensemble members or the ensemble model to avoid the biases of potential overfit and ensure the ensemble can determine the models' behaviors on held-out data.

Once the previous Adaptive Robust Ensemble problem (\ref{ARO2}) has been solved to optimality using the equivalent formulations, one can make predictions at time $T+k$, using the affine decision rule: $$y_{T+k} = \mathbf{X}_{T+k}^\top \boldsymbol{\beta}_{T+k}=\mathbf{X}_{T+k}^\top (\boldsymbol{\beta}_0^*+\mathbf{V}_0^* \mathbf{Z}_{T+k}), \quad \forall k \geq 1.$$ 

Below, we summarize the overall training, validation, and testing mechanism of the adaptive ridge formulation in Algorithm \ref{algo:1}.

\begin{algorithm}
\renewcommand{\algorithmicrequire}{\textbf{Input:}}
\renewcommand{\algorithmicensure}{\textbf{Output:}}

\caption{Adaptive Ridge Pipeline}\label{algo:1}
\begin{algorithmic}[1]
\Require{\\
$\mathbf{X}$ --- Ensemble members forecast data \\
$\mathbf{y}$ --- Targets \\ 
$\Lambda := [\lambda_1, \ldots, \lambda_p ]$ --- Set of regularization parameters to validate with grid search \\
$W := [w_1, \ldots, w_T ]$ --- Set of window sizes of past data included in the adaptive rule to validate with grid search \\
$train\_size, val\_size$ --- Percentage of data for training and validation sets (test set is the rest)\\
$M$ --- Validation error metric function \\
$\mathcal{M}$ --- Set of evaluation metrics for the test set
}
\end{algorithmic}
\begin{algorithmic}[1]
\Ensure{\\$\boldsymbol{\beta}_0, \mathbf{V}_0$ --- Parameters for the adaptive rule \\
}
results --- Results on the test set for metrics $\mathcal{M}$

\Statex
\end{algorithmic}
\begin{algorithmic}[1]
\renewcommand{\algorithmicensure}{\textbf{Algorithm:}}
\Ensure{}
\State $\mathbf{X}_{train-val}, \mathbf{X}_{test}, \mathbf{y}_{train-val}, \mathbf{y}_{test} \gets \operatorname{train\_test\_split}(\mathbf{X}, \mathbf{y}, train\_size, shuffle = false)$

\State $\mathbf{X}_{train}, \mathbf{X}_{val}, \mathbf{y}_{train}, \mathbf{y}_{val} \gets \operatorname{train\_test\_split}(\mathbf{X}_{train-val}, \mathbf{y}_{train-val}, val\_size, shuffle = false)$

\State $\boldsymbol{\beta}_{0_{i,w_t}}^*, \boldsymbol{V}_{0_{i,w_t}}^* \gets \displaystyle \argmin _{ \boldsymbol{\beta} \in \Omega^{w_t}_{\text{adapt}}} \displaystyle \left\|\mathbf{y}_{train}-\tilde{\mathbf{X}}_{train} \boldsymbol{\beta}\right\|_2+\lambda_i \left\|\boldsymbol{\beta}\right\|_2, \quad \forall i \in (1,p), \quad \forall t \in (1,T)$ 
\Statex \Comment{Find the adaptive rule coefficients for all hyperparameter combinations.}

\State $\boldsymbol{\beta}_{i,w_t}^{val} \gets \operatorname{get\_adaptive\_coefs}(\mathbf{X}_{val}, \mathbf{y}_{val}, \boldsymbol{\beta}_{0_{i,w_t}}^*, \boldsymbol{V}_{0_{i,w_t}}^*)$ 
\Statex\Comment{Compute the adaptive coefficients on the validation set.}

\State $i^*, t^* \gets \displaystyle \argmin _{ i, t \forall i \in (1,p), t \in (1,T) } \displaystyle M\left(\mathbf{y}_{val}, \tilde{\mathbf{X}}_{val} \boldsymbol{\beta}_{i,w_t}^{val}\right)$ \Statex\Comment{Determine best hyperparameter combination based on validation performance.}

\State $\boldsymbol{\beta}_0^*, \boldsymbol{V}_0^* \gets \displaystyle \argmin _{ \boldsymbol{\beta} \in \Omega_{\text{adapt}}^{w_{t^*}}} \displaystyle \left\|\mathbf{y}_{train-val}-\tilde{\mathbf{X}}_{train-val} \boldsymbol{\beta}\right\|_2+\lambda_{i^*} \left\|\boldsymbol{\beta}\right\|_2$ \Statex\Comment{Retrain the best model on the training and validation data combined.}

\State $\boldsymbol{\beta}^* \gets \operatorname{get\_adaptive\_coefs}(\mathbf{X}_{test}, \mathbf{y}_{test}, \boldsymbol{\beta}_{0}^*, \boldsymbol{V}_{0}^*)$ \Statex\Comment{Compute the adaptive coefficients on the test set.}

\State results $\gets \displaystyle \mathcal{M}\left(\mathbf{y}_{test}, \tilde{\mathbf{X}}_{val} \boldsymbol{\beta}^*\right)$ \Statex\Comment{Compute the results on the test set.}

\end{algorithmic}
\end{algorithm}

\section{Synthetic Experiments}\label{sec:synthetic}

Our synthetic experiments aim to identify the suitable conditions where our adaptive ensemble method can have the edge over competitive methods and provide guidance and intuition on the hyperparameters to use.

\subsection{General Set Up}

We focus our experiments and the rest of the paper on the Adaptive Ridge problem:
\begin{equation}
    \begin{array}{ll}
    \displaystyle \min _{ \boldsymbol{\beta} \in \Omega_{\text{adapt}}} & \displaystyle \left\|\mathbf{y}-\tilde{\mathbf{X}} \boldsymbol{\beta}\right\|_2+\lambda \left\|\boldsymbol{\beta}\right\|_2\\
\end{array}.
\end{equation}

where we remind that $\Omega_{\text{adapt}}$ corresponds to the previous affine decision rules on each $\boldsymbol{\beta}_t$:

\[
\begin{aligned}
\Omega_{\text{adapt}} & := \{\boldsymbol{\beta} \text{ subject to } \boldsymbol{\beta}_t =\boldsymbol{\beta}_0 + \mathbf{V}_0 \cdot \mathbf{Z}_t,  \boldsymbol{\beta}_0 \in \mathbb{R}^m,  \mathbf{V}_0 \in \mathbb{R}^{m\times m \cdot \tau}, \quad \forall t \},\\
\mathbf{Z}_t & := [\mathbf{X}_{t-\tau} -y_{t-\tau}\boldsymbol{e}| \ldots| \mathbf{X}_{t-1}-y_{t-1}\boldsymbol{e}] \in \mathbb{R}^{m\cdot \tau}.
\end{aligned}
\]


We investigate the dynamics of the different ensembles' performances with respect to the following:

\begin{itemize}
\item The number of ensemble members $m$ available,
\item The number of samples available for model training $T_\text{train}$, 
\item The amount of model drift in the ensemble members,
\item The number of past time steps $\tau$ the models can use in the decision rule.
\end{itemize}


\subsubsection{Metrics}\label{sec:metrics}

Overall, we reproduced each experiment 30 times and evaluated the different ensemble methods with the average and standard deviation of the following metrics:
\begin{itemize}
    \item to evaluate \emph{accuracy}: Mean Absolute Error (MAE), Root Mean Square Error (RMSE), Mean Absolute Percentage Error (MAPE),
    \item to evaluate \emph{robustness}: Conditional Value at Risk 5\% (CVaR 5), Conditional Value at Risk 15\% (CVaR 15).
\end{itemize} 


In high-stakes machine learning applications, performing well on average and when restricted to challenging scenarios is crucial. Therefore, we evaluated the Conditional Value at Risk  $\operatorname{CVaR}_\alpha$ \citep{Rockafellar} that determines the expected loss once the $\alpha$ Value at Risk (VaR) breakpoint has been breached. 

Consider $T$ forecasting cases, where the ground truth at time step $t$ is $y_t$ and the ensemble method prediction is noted $\hat{y_t}$. The metrics are calculated as follows:

\begin{align*}
    \displaystyle \operatorname{MAE}(\mathbf{y},\mathbf{\hat{y}})&:=\frac{1}{T}\sum_{t=1}^T |y_t - \hat{y_t}|,\\
\operatorname{RMSE}(\mathbf{y},\mathbf{\hat{y}})&:=\sqrt{\frac{1}{T}\sum_{t=1}^T (y_t - \hat{y_t})^2},\\
\displaystyle \operatorname{MAPE}(\mathbf{y},\mathbf{\hat{y}})&:=\frac{100}{T}\sum_{t=1}^T \left|\frac{y_t - \hat{y_t}}{y_t}\right|,\\
\operatorname{CVaR}_{\alpha}(\mathbf{y},\mathbf{\hat{y}})&:=\min _{\tau} \tau + \frac{1}{\alpha T} \sum_{t=1}^{T}\operatorname{max}(0,|y_t-\hat{y_t}|-\tau).
\end{align*}

Notice that $\operatorname{CVaR}_{\alpha}$ in its discrete form is a simple optimization problem that we solve using Julia.

\subsubsection{Data Generation Process}

\paragraph*{Ground Truth} For all experiments, we used the same ground truth data: a univariate time series of 4,000 time steps. We generated a periodic signal with some normally distributed noise: 
\[
\begin{aligned}
y(t) &= \sin\left(\frac{2 \pi}{500}\cdot t\right)+\epsilon(t), \quad \forall t \in [4000],\\
\epsilon(t) &\sim \mathcal{N}(0, 0.1).
\end{aligned}
\]

\paragraph*{Ensemble Members' Forecasts} To simulate the availability of several forecasts provided by different predictive models at each time step, we randomly chose the bias and standard deviation of the ensemble members' errors from a given range.

Formally, at each time step $t$, we generated the values $\tilde{X}_k(t)$ of each ensemble member $k \in [m]$ as:
\[
\begin{aligned}
\tilde{X}_k(t) &= y(t)+\epsilon_k(t), \quad \forall t \in [4000],\\
\epsilon_k(t) &\sim \mathcal{N}(b_k, \sigma_k),
\end{aligned}
\]
where we sampled $b_1, \ldots, b_m$ and $\sigma_1, \ldots, \sigma_m$ independently before hand as:
\[
\begin{aligned}
b_k &\sim \operatorname{Uniform}(-0.5, 0.5),\\
\sigma_k &\sim \operatorname{Uniform}(-0.5, 0.5),
\end{aligned}
\]

Since we repeated experiments 30 times across different seeds, we resampled different $b_k, \sigma_k$ for each experiment.

\paragraph*{Adding Drift to the Ensemble Members' Forecasts} To test the capacity of the ensembles to perform well against drifting forecasts, we additionally simulate a temporal change in the error distributions of each ensemble member.

Therefore, at each time step $t$, our final forecast values $X_k(t)$ of each ensemble member $k \in [m]$ are defined as:
\[
\begin{aligned}
X_k(t) &= \tilde{X}_k(t)+\frac{t}{4000}\cdot\text{drift}_k(t), \quad \forall t \in [4000],\\
\text{drift}_k(t) &\sim \mathcal{N}(b'_k, \sigma'_k),
\end{aligned}
\]
where we selected their error biases $b'_1, \ldots b'_m$ and error standard deviations $\sigma'_1, \ldots, \sigma'_m$ before hand as:
\[
\begin{aligned}
b'_k &\sim \mathcal{N}(0, \sigma_{\text{drift}}),\\
\sigma'_k &\sim \operatorname{Uniform}(0, s_{\text{drift}}).
\end{aligned}
\]

We conducted experiments with different $\sigma_{\text{drift}}, s_{\text{drift}}$, ranging from 0 to 1. Again, since we repeated experiments 30 times across different seeds, we resampled $b'_k, \sigma'_k$ for each experiment.

\paragraph*{Training, Validation, Test Splits}

We split the data chronologically into training (50\% of the data, i.e., $t \in [1,2000]$), validation (25\% of the data, i.e., $t \in [2001,3000]$), and test (25\% of the data, i.e., $t \in [3001,4000]$) sets. 

Table \ref{tab:features} in Appendix summarizes all the different hyperparameters involved in the synthetic data experiments.

\subsubsection{Other Methods Benchmarked}

Along with the Adaptive Ridge method, we evaluated the following ensembles on the same data:
\begin{itemize}
    \item \textit{Best Model in Hindsight}: we determine in hindsight what was the best ensemble member on the test data with respect to the MAPE and report its performance for all metrics. Notice that in real-time, it is impossible to know which model would be the best on the overall test set, which means the best model in hindsight is a competitive benchmark.
    \item \textit{Ensemble Mean}: consists of weighing each model equally, predicting the average of all ensemble members at each time step.
    \item \textit{Exp3} \citep{bianchi}: under the multi-armed bandit setting, Exp3 weighs the different models to minimize the regret compared to the best model so far. The update rule is given by:

    \begin{align*}
    \boldsymbol{\beta}_{t+1}^i &= \exp\left(\frac{-\eta_t \cdot \operatorname{Regret}_t^i}{\sum_{i=1}^m \exp(-\eta\cdot \operatorname{Regret}_t^i)}\right),  \text{ with} \\ \quad \operatorname{Regret}_t^i &= \sum_{s=t-t_0}^{t}(y_s-X_s^i)^2, \quad \forall i\in[1,m], \ \text{and} \\
    \eta_t &= \sqrt{\frac{8\log(m)}{t_0}},
    \end{align*}

    where the window size $t_0$ considered to determine the regularized leader is tuned.
    
    \item \textit{Passive-Aggressive} \citep{passive}, a well-known margin-based online learning algorithm that updates the weights of its linear model based on the following equation: 
    \[\boldsymbol{\beta}_{t+1}=\boldsymbol{\beta}_{t}+\operatorname{sign}\left(y_{t}\mathbf{e}-\mathbf{X}_t^\top\boldsymbol{\beta}_{t}\right) \tau_{t} \mathbf{X}_{t}, \quad \tau_t = \frac{\max(0, |\mathbf{X}_t^\top\boldsymbol{\beta}_t  - y_t|-\epsilon)}{\|\mathbf{X}_t\|_2^2},\]
    where $\epsilon$ is a margin parameter to be tuned.
    
    \item \textit{Ridge} \citep{ridge}: consists in learning the best linear combination of ensemble members by solving a ridge problem on the forecasts $\mathbf{X}_{t}$:
    
    \begin{equation*}\label{main}
    \begin{array}{ll}
    \displaystyle \min _{\boldsymbol{\beta}} & \displaystyle \sum_{t=1}^{T}\left(y_{t}-\mathbf{X}_{t}^\top \boldsymbol{\beta}\right)^2+\lambda \|\boldsymbol{\beta}\|_2^2,\\
    
\end{array}
\end{equation*}
which gives the closed-form solution:
$\boldsymbol{\beta} = (\mathbf{X}\mathbf{X}^\top + \lambda \boldsymbol{I})^{-1}\mathbf{X}\mathbf{y}.$

\end{itemize}

We summarize the different ensemble methods evaluated in Table \ref{tab:ensemble_methods} below.

\begin{table}[h!]

\caption{Summary of all ensemble methods compared.}
\centering
\renewcommand{\arraystretch}{1.5}
\resizebox{\textwidth}{!}{
\begin{tabular}{|c|c|c|c|}
\hline \textbf{Ensemble method} & \textbf{Time-varying}  & \textbf{Update rule} & \textbf{Hyperparameters to tune} \\
& \textbf{weights} & &  \\
\hline Best Model in Hindsight & No & N/A & N/A   \\

Ensemble Mean & No & $\boldsymbol{\beta}_t = \frac{1}{m}\boldsymbol{e}$ & N/A \\

Exp3 & Yes & $\boldsymbol{\beta}_{t+1}^i = \exp\left(\frac{-\eta_t \cdot \operatorname{Regret}_t^i}{\sum_{i=1}^m \exp(-\eta_t \cdot \operatorname{Regret}_t^i)}\right)$&Window of past data to use to compute regrets\\

Passive-Aggressive &  Yes & $\boldsymbol{\beta}_{t+1}=\boldsymbol{\beta}_{t}+\operatorname{sign}\left(y_{t}\mathbf{e}-\mathbf{X}_t\boldsymbol{\beta}_{t}\right) \tau_{t} \mathbf{X}_{t}$ & Margin parameter $\epsilon$ used in $\tau_t$\\

Ridge &  No & $\boldsymbol{\beta}_t = (\mathbf{X}\mathbf{X}^\top + \lambda \boldsymbol{I})^{-1}\mathbf{X}\mathbf{y}$ & Regularization factor $\lambda$ \\

Adaptive Ridge &  Yes & $\boldsymbol{\beta}_t = \boldsymbol{\beta}_0 + \mathbf{V}_0 \cdot \mathbf{Z}_t$ & Regularization factor $\lambda$, window of past data in $\mathbf{Z}_t$\\
\hline
\end{tabular}}\label{tab:ensemble_methods}
\end{table}

\subsubsection{Validation Mechanism}

For each experiment, we performed a grid search in $[10^{-4}, 10^{-3}, 10^{-2}, 10^{-1}, 1]$ on the validation set to tune the value of the regularization factor $\lambda$ of the adaptive ridge formulation, $\lambda_\text{ridge}$ for the ridge formulation, and $\epsilon_\text{PA}$ for the Passive-Aggressive algorithm. For each seed, we selected the parameter that led to the lowest average MAE. Then, we retrained the models on the training and validation sets combined, using the tuned values, and evaluated the different metrics on the held-out test set.

\subsection{Software and Computational Resources}

We wrote all code in Julia 1.6 \citep{bezanson2017julia}, using the JuMP package \citep{jump} to write optimization functions and Gurobi \citep{gurobi} as the solver.
We performed each individual experiment reported in this paper using 4 Intel Xeon
Platinum 8260 CPU cores from the Supercloud cluster \citep{reuther2018interactive}.

\subsection{Evaluation of the Number of Ensemble Members $m$}

These experiments compare the performance of the different ensemble techniques with respect to the number of ensemble members available.

We fixed the drift parameters of the error distributions to $\sigma_\text{drift} = 0.5,  s_\text{drift} = 0.5$, and the number of past time steps to use to $\tau = 5$.

\begin{figure}[h] 
    \includegraphics[width=1\linewidth]{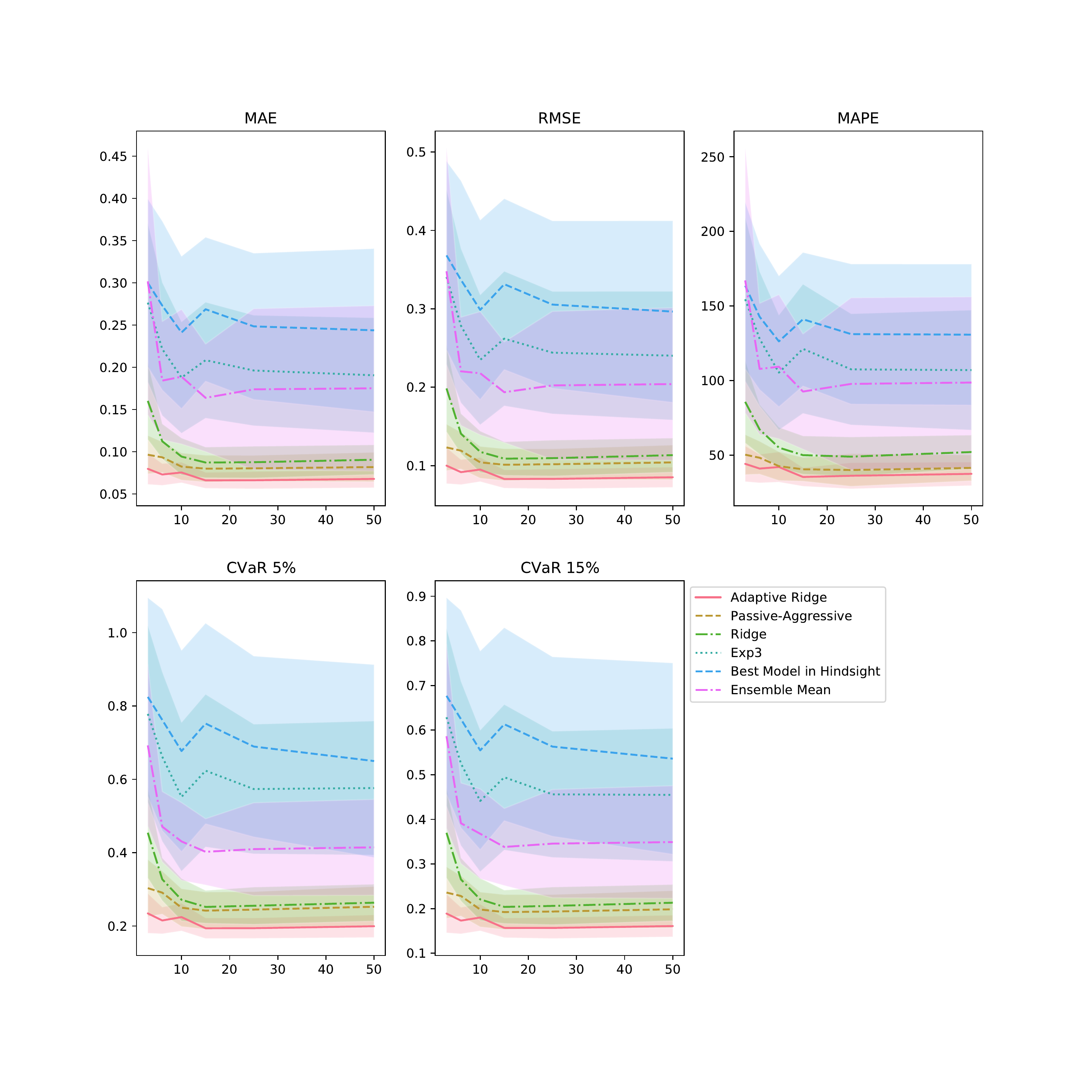}
    \caption{Average performance across 30 seeds of the different ensemble methods with respect  to the number of ensemble members. We added in light color the corresponding one-standard deviation intervals around the average.}
    \label{fig:number_ensemble_members}
\end{figure}

The results with all metrics in Figure \ref{fig:number_ensemble_members} show the same trend. We make the following conclusions:

\begin{itemize}
    \item The performance increases when more models can be used by the ensemble, which makes sense since there are more opportunities to unveil the underlying ground truth data by learning how the different models make their errors. 
    \item There is a rapid increase in the performance going from 3 ensemble members to 10. However, after 15 ensemble members, the performance is stable and does not improve anymore or becomes slightly worse. It suggests that a high number of forecasters does not necessarily translate to better performance. Instead, a higher number of models can lead to overfitting the training set.
    \item Adaptive ridge is consistently the best method and outperforms the classic ridge formulation, which is static. The PA model also performs well. These methods clearly outperform the mean baseline and show that a weighted linear combination can significantly improve upon the best model of the ensemble.
    \item The Exp3 algorithm indeed achieves lower errors than the best model but does not compete with the PA algorithm and the ridge formulations.
\end{itemize}

\subsection{Evaluation of the Ensemble Members' Drift}

These experiments aim to determine how well methods perform when the ensemble members can significantly drift across time.

We fixed the number of ensemble members to $m = 10$ and the window size of past forecasts the adaptive part can use to $\tau = 5$.

\paragraph*{Gaussian Drift} We test several values of $\sigma_\text{drift} =  s_\text{drift}$ ranging from 0 to 1.

\begin{figure}[h] 
    \includegraphics[width=1\linewidth]{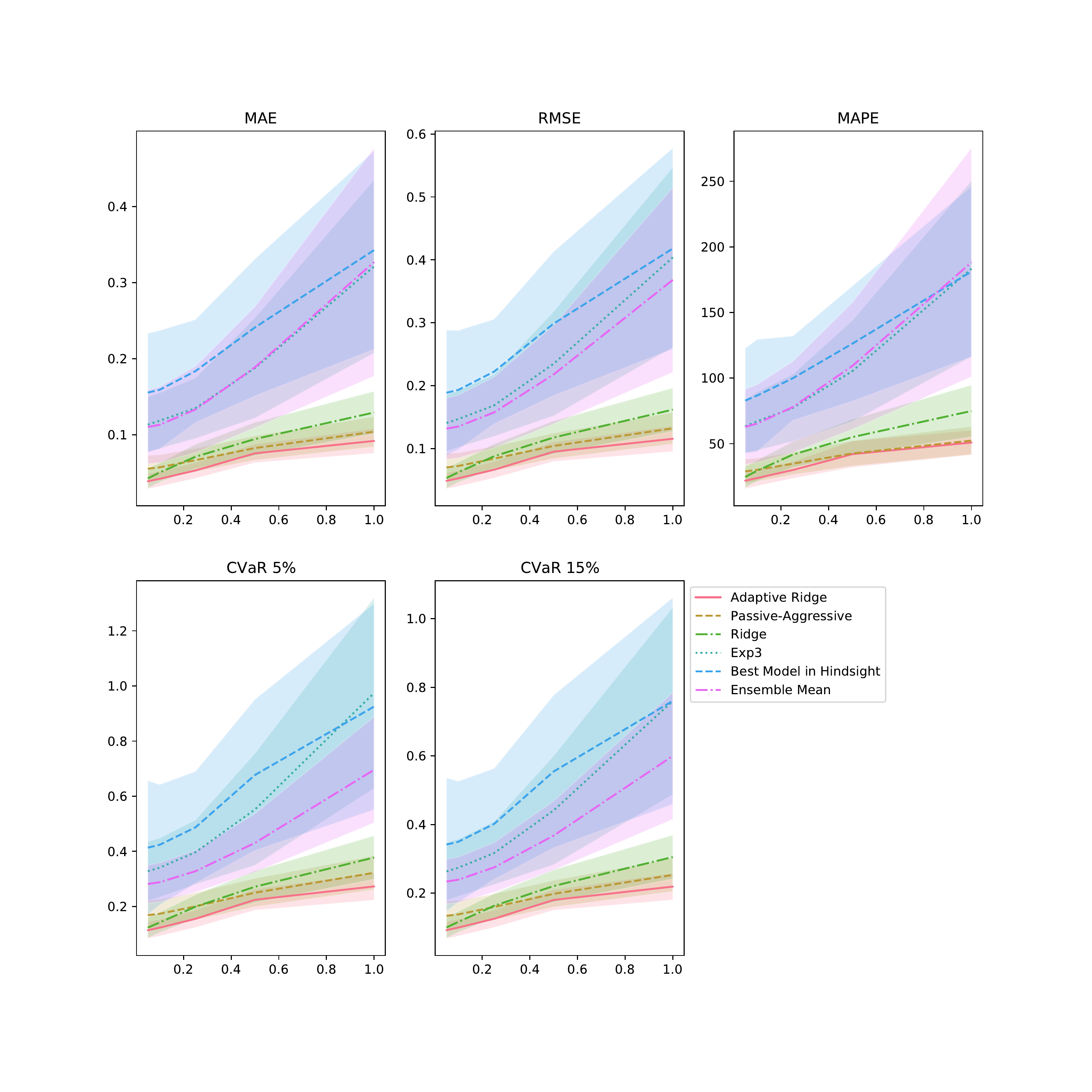}
    \caption{Average performance across 30 seeds of the different ensemble methods with respect to the ensemble members' possible drift. We added in light color the corresponding one-standard deviation intervals around the average.}
    \label{fig:possible_drift}
\end{figure}

Figure \ref{fig:possible_drift} shows the following:

\begin{itemize}
    \item When there is no or very little drift, ridge and adaptive ridge perform very similarly, which is expected since the adaptive part is mainly useful to leverage temporal trends.
    \item When the ensemble members can drift more than 0.2, ridge starts performing worse than the PA algorithm, which is again expected since the trained weights may differ from the ensemble members' performance trend on the test set.
    \item On the other side, adaptive ridge maintains its lead on PA since it can leverage recent trends.
    \item The baseline methods quickly start to perform terribly with high drift. At the same time, PA and adaptive ridge metrics maintain a seemingly sublinear relationship with respect to the amount of drift possible.
\end{itemize}

\paragraph*{Discrete Gaussian Drift} 
We also tested the models with a different type of noise, where the Gaussian drift happens or not according to a Bernoulli variable.

For this experiment only, our final forecast values $X_k(t)$ of each ensemble member $k \in [m]$ are defined as:
\[
\begin{aligned}
X_k(t) &= \tilde{X}_k(t)+b_k(t)\cdot\text{drift}_k(t), \quad \forall t \in [4000],\\
b_k(t) &\sim \operatorname{Bernoulli}(p_\text{drift}),\\
\text{drift}_k(t) &\sim \mathcal{N}(b'_k, \sigma'_k), 
\end{aligned}
\]
where we selected their error biases $b'_1, \ldots b'_m$ and error standard deviations $\sigma'_1, \ldots, \sigma'_m$ before hand as:
\[
\begin{aligned}
b'_k &\sim \mathcal{N}(0, 0.5),\\
\sigma'_k &\sim \operatorname{Uniform}(0, 0.5).
\end{aligned}
\]

We tested several values of the Bernoulli parameter $p_\text{drift}$ ranging from 0 to 1. 0 means no drift is added, and the ensemble members' errors follow their initial distribution. 1 means the drift is always added, shifting the ensemble members' errors to a new distribution. Any value in-between means the errors are oscillating between two distributions of errors.

\begin{figure}[h] 
    \includegraphics[width=1\linewidth]{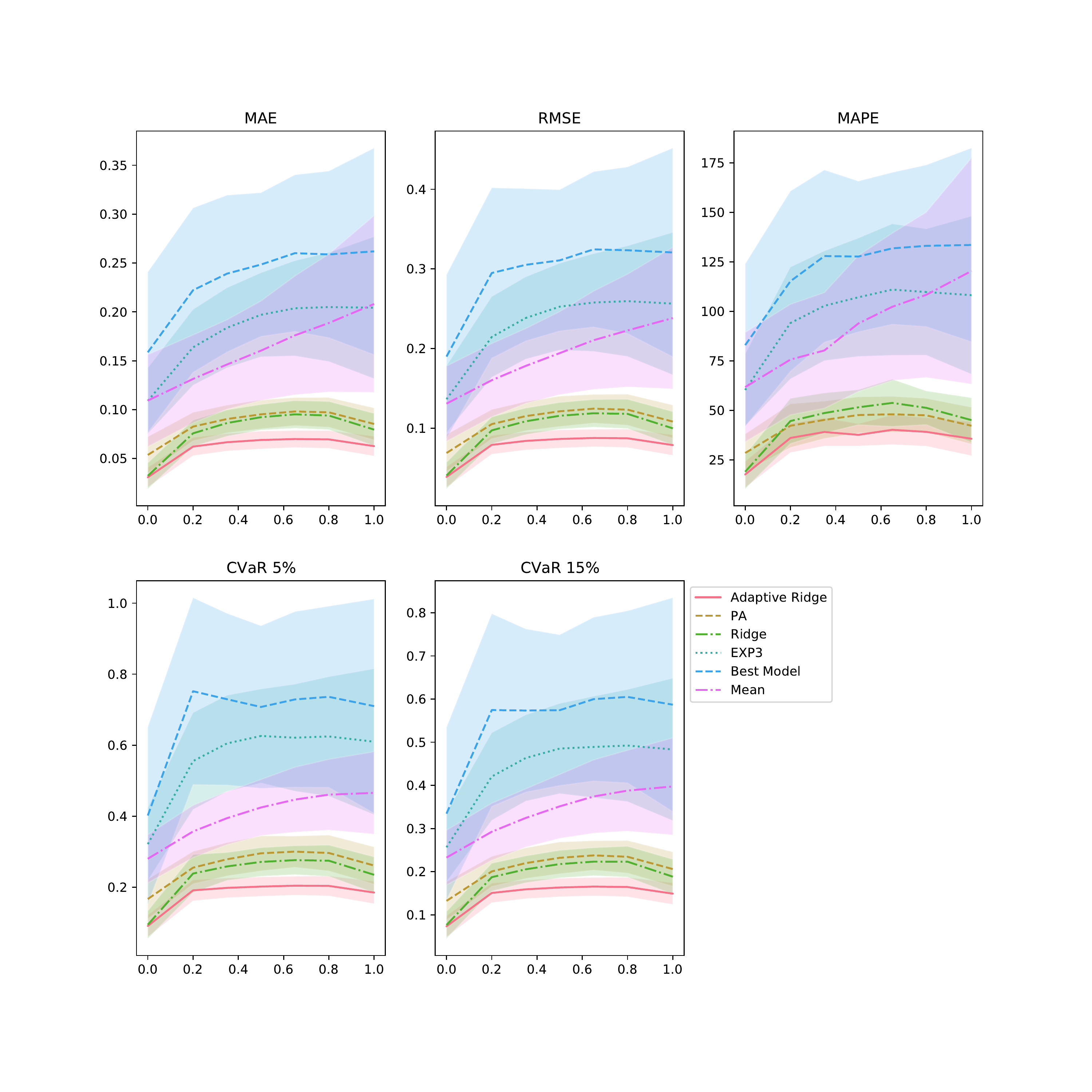}
    \caption{Average performance across 30 seeds of the different ensemble methods with respect to the ensemble members' possible drift. We added in light color the corresponding one-standard deviation intervals around the average.}
    \label{fig:discrete}
\end{figure}

The results shown by Figure \ref{fig:discrete} suggest the following:
\begin{itemize}
    \item The ridge method cannot compensate for an additional discrete noise as much as the adaptive method.
    \item Adaptive ridge has the most stable performance across the different regimes, which is expected since it is a robust formulation.
    \item For all methods, it is more difficult to adjust to oscillating error distributions than to a fixed error distribution, even if the errors are higher in the second distribution. 
\end{itemize}

\subsection{Evaluation of the Window of Past Forecasts that Can Be Used}

These experiments aim to determine the performance of adaptive ridge with respect to the window size of past forecasts it can use at each time step.

We fixed the number of ensemble members to $m = 10$, the drift parameters of the error distributions to $\sigma_\text{drift} = 0.5,  s_\text{drift} = 0.5$. We vary $\tau \in [1, 25]$.

\begin{figure}[h] 
    \includegraphics[width=1\linewidth]{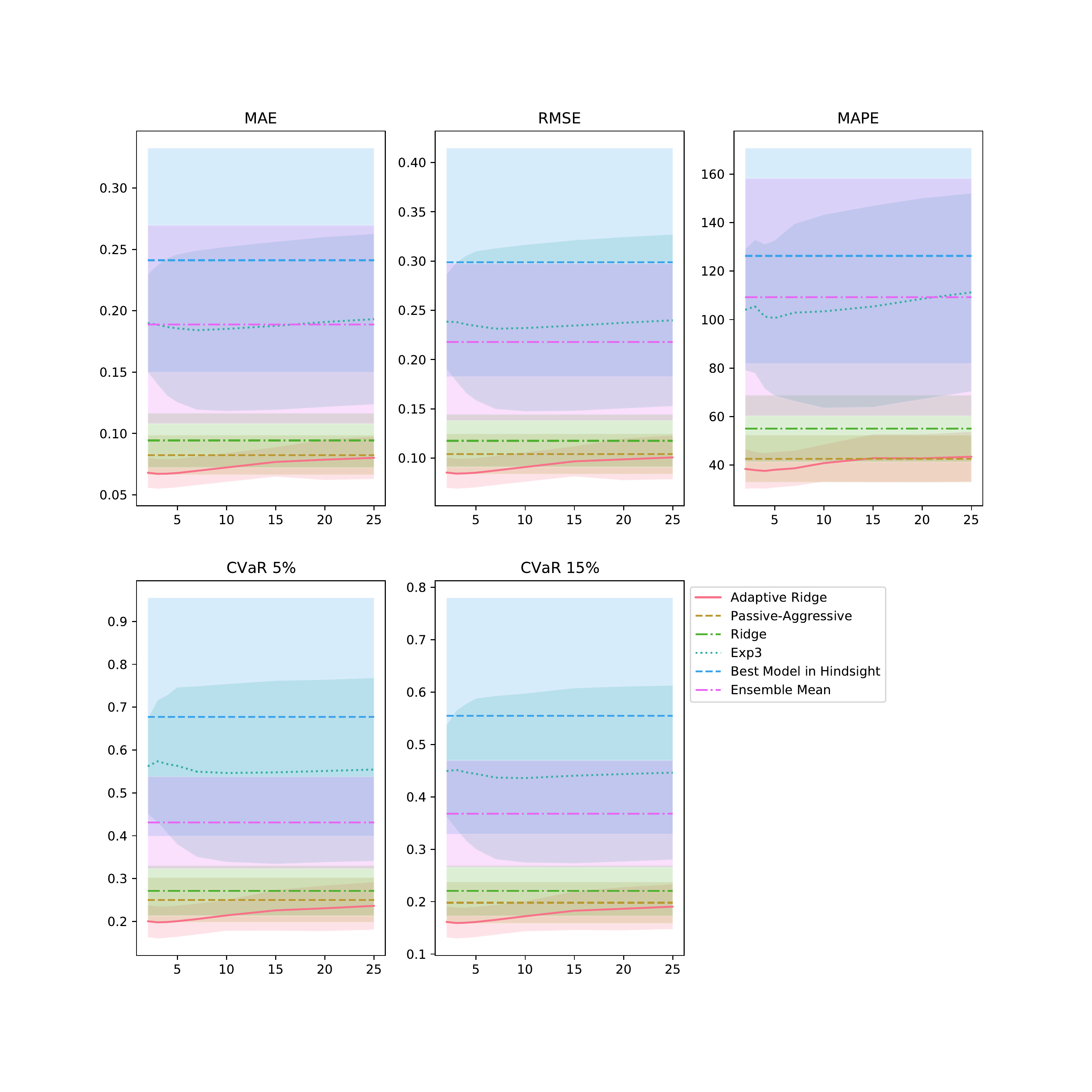}
    \caption{Average performance across 30 seeds of the different ensemble methods with respect to the window size of past data that can be used. We added in light color the corresponding one-standard deviation intervals around the average.}
    \label{fig:window_size}
\end{figure}

Figure \ref{fig:window_size} shows the following:

\begin{itemize}
    \item A large window size deteriorates the performance, as the adaptive ridge model overfits.  
    \item Between a window size of 2 and 8 time steps, the performance is stable. There is a sweet spot around 3-4 past time steps. It suggests the importance of hypertuning this value for the practitioner.
\end{itemize}

\subsection{Evaluation of the Training Data Size}

Contrary to all previous experiments, we varied the amount of training and validation data available to evaluate how the different ensemble methods adapt to lower amounts of samples in particular. The test set remained the same as previously for all experiments. We split the training and validation data so that the test set immediately follows those samples.  
We varied the number of samples between 100 and 3000 by fixing the validation data size to one-third of the data available (e.g., for data size = 750, we get training data size = 500, validation data size = 250).

We fixed the number of ensemble members to $m = 10$, the drift parameters of the error distributions to $\sigma_\text{drift} = 0.5,  s_\text{drift} = 0.5$, and the number of past time steps to use to $\tau = 5$.

\begin{figure}[h] 
    \includegraphics[width=1\linewidth]{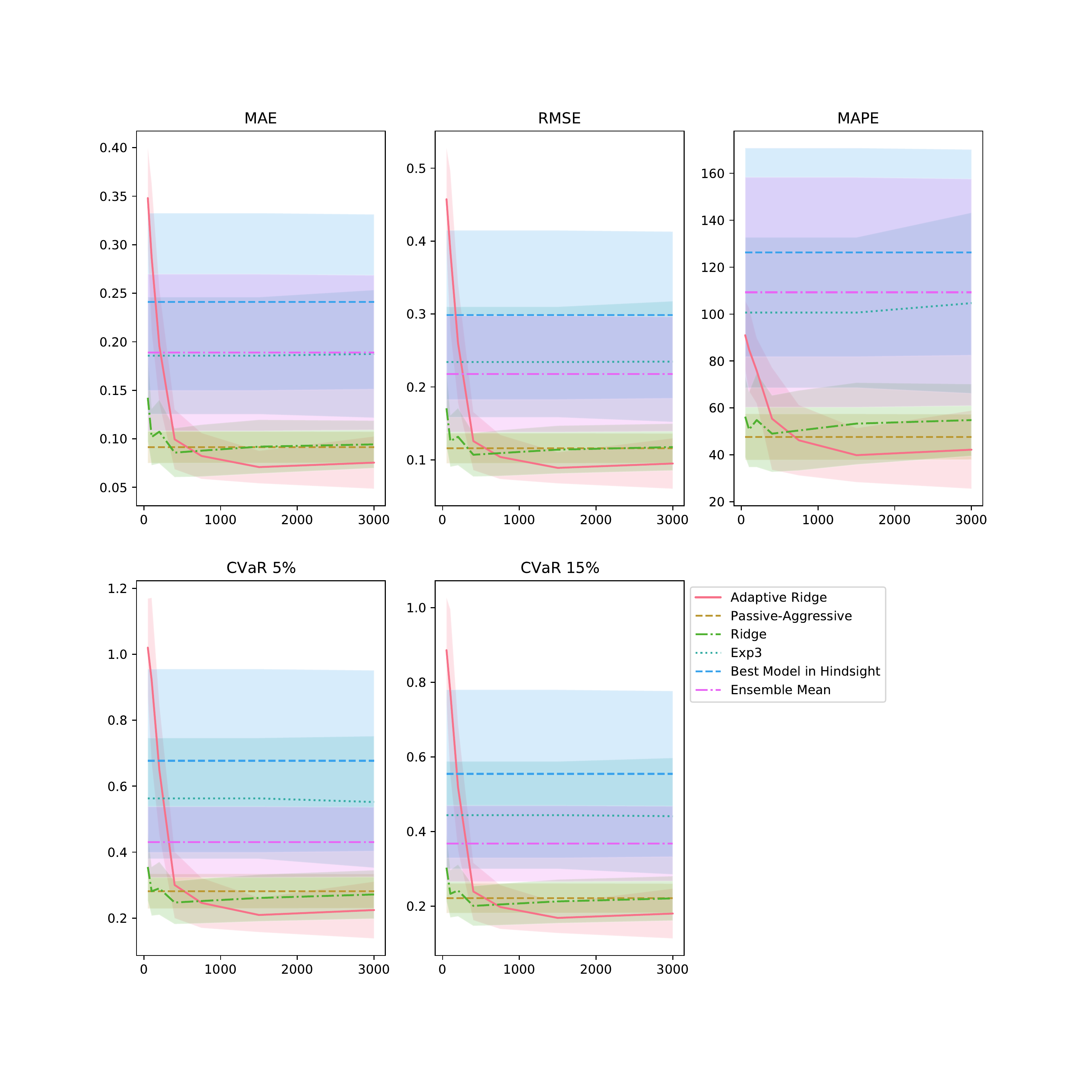}
    \caption{Average performance across 30 seeds of the different ensemble methods with respect to the number of training samples available. We added in light color the corresponding one-standard deviation intervals around the average.}
    \label{fig:train_length}
\end{figure}

We notice two interesting regimes in Figure \ref{fig:train_length}:

\begin{itemize}
    \item In the small data regime, with under 500 samples available for training, PA is the best method as it does not rely on any training data. Ridge closely follows and bridges the gap at around 300 samples. The adaptive ridge method suffers from the lack of data as it overfits the training set.
    \item With more data available, the adaptive ridge method closes the gap at 500 - 750 samples and then outperforms all other methods.
    \item After 750 samples, the performance of all methods remains stable, with adaptive ridge being the best, followed by PA, and then ridge.
\end{itemize} 

These experiments highlight that the adaptive framework is primarily suitable when sufficient data is available to determine adaptive coefficients that generalize well enough.

\subsection{Conclusion of the Synthetic Experiments}

We summarize our main findings:
\begin{itemize}
    \item Adaptive ridge has a significant edge over ridge when sufficient data is available and when ensemble members may suffer from performance drift.
    \item Adaptive ridge is not suitable when there is little data available. In that case, a purely online method such as PA should be preferred, or a simpler static method such as ridge.
    \item We highlight the importance of validating the different hyperparameters of the adaptive ridge method: the regularization factor $\lambda$ and the window size $\tau$ of past forecasts to use in the adaptive term. Adaptive ridge, due to its additional variables, may be sensitive to overfitting in certain situations (e.g., large window size, small training data size).
    \item Ensemble methods such as the Ensemble Mean and Exp3 sum models' weights to 1, while ridge and adaptive ridge can correct forecast biases with negative weights offering greater accuracy and robustness than the best model in hindsight.
\end{itemize}

\section{Real-World Case Studies}\label{sec:realworld}

This section illustrates the benefits of our adaptive ensemble method with real-world data. We investigate three different applications where the time series characteristics differ (see Figure \ref{fig:ts_examples}): 
\vspace{-0.4cm}
\begin{enumerate}
    \item Air pollution management through wind speed forecasting: the time series exhibits a daily cyclical behavior and a long-term seasonality.
    \item Energy use forecasting: the time series exhibits bursts a few hours a week. 
    \item Tropical cyclone intensity forecasting: the time series are shorter and smoother, but the underlying system is chaotic and complicated to predict. 
\end{enumerate}
\begin{figure}[h] 
\centering
    \includegraphics[width=0.88\linewidth]{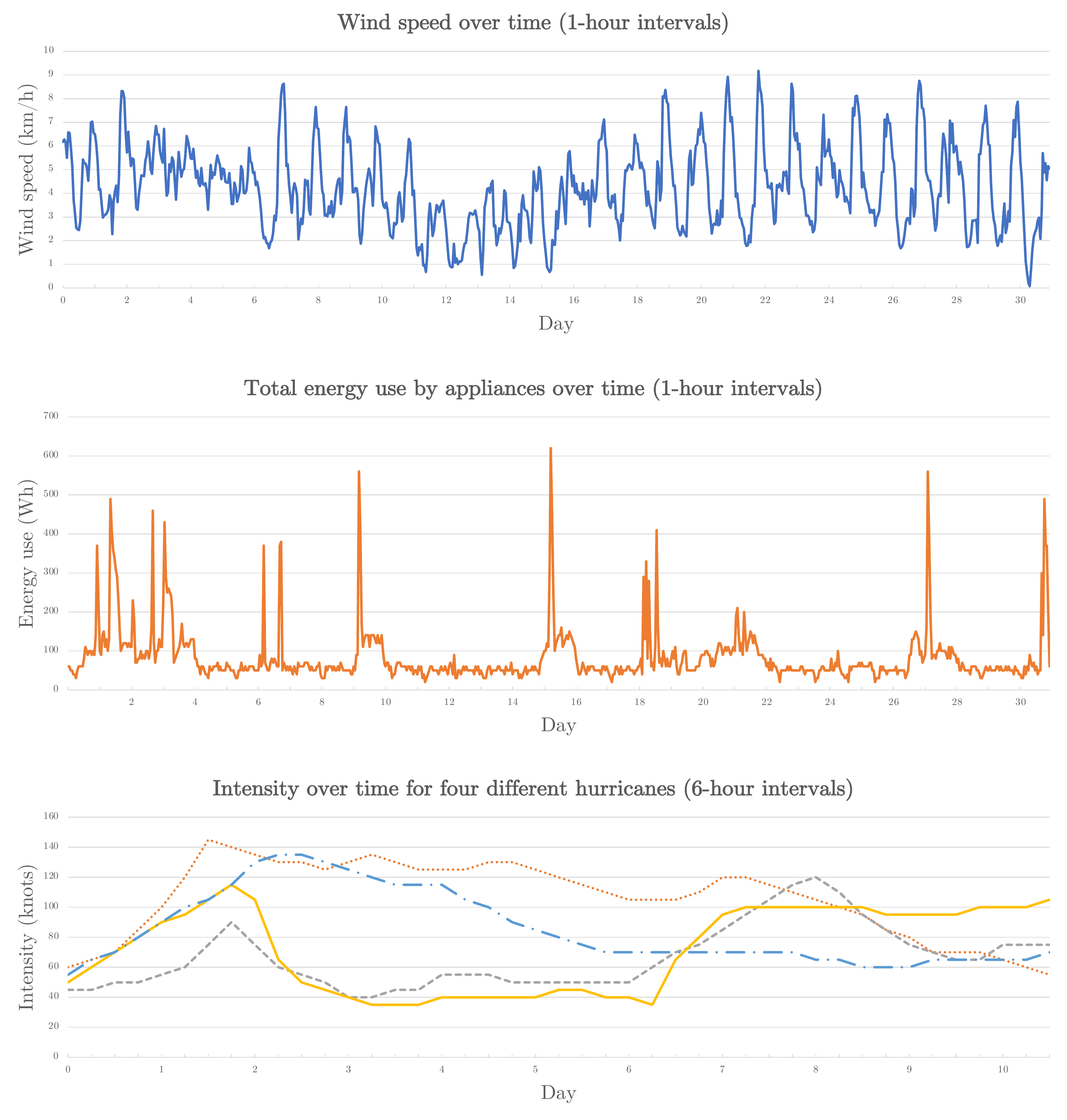}
    \caption{Subsamples of the time series for our three real-world use cases: wind speed at Safi factories, energy use by appliances, tropical cyclone intensity.}
    \label{fig:ts_examples}
\end{figure}

\subsection{Next-Hour Wind Speed Forecasting for Pollution Management}

\paragraph*{Motivation}

Air pollution is a pressing concern with far-reaching consequences for human health, the environment, and ecosystems. The emission of toxic substances from chemical factories poses a significant risk to nearby populations, mainly when meteorological conditions carry the pollution toward populated areas. It is crucial to accurately predict weather conditions to inform real-time, effective pollution management strategies to mitigate this risk. This forms the basis of our first use case: predicting wind speed for air pollution management. Our data comes from a phosphate production site in Morocco, the largest chemical industry plant in the country. Previously, \cite{safi} demonstrated with this plant the effectiveness of a data-driven approach, including a weighted average ensemble model, in reducing the diffusion of air pollution from industrial plants into nearby cities, providing a valuable use case to show how our adaptive ensembles can improve further the robustness and accuracy.

\paragraph*{Air Pollution Management Pipeline}

The phosphate production site is located 10km southwest of Safi city, Morocco, threatening the health and well-being of the 300,000 residents. With this population in close proximity, weather conditions play a critical role in determining air pollution dispersion. The site therefore implemented a comprehensive monitoring procedure that includes planning production rates and shutdowns based on 48-hour meteorological forecasts, as well as real-time wind monitoring systems to detect dangerous conditions and stop production. The timely and accurate wind speed prediction in the next hour is crucial for the effective functioning of this procedure.

Building on the success of \cite{safi}'s implementation of regularized linear regression models for wind speed forecasting, we explore the potential of the adaptive ridge method to improve the accuracy and robustness and enhance the pollution management pipeline.

\subsubsection{Data}

\paragraph*{Ensemble Members} 

The Safi operations team receives the local official weather forecast bulletins from the Moroccan weather agency daily around 6:00 am GMT, with an update around 6:00 pm GMT. The wind speed forecasts are provided hourly for the next 48 hours.
Besides this operational forecast, the different models used by \cite{safi} are XGBoost, Decision Trees, Optimal Regression Trees, Lasso Regression, and Ridge Regression. These models took as input the recent meteorological data observations. 

We have access to the 8494 historical one-hour lead time wind speed forecasts made by these 6 models between 2019 and 2022 for every hour. The ground truth was measured by a sensor on-site that collected data every minute and then was averaged hourly on the Safi platforms. 

\paragraph*{Experiments Protocol}

We split the data chronologically into training (50\%), validation (20\%), and test (30\%) sets. 
We standardized all data (targets and features) by subtracting the mean of the training targets and dividing by the standard deviation of the training targets.


After selecting the best hyperparameter combination for each ensemble method using the validation set (see the Appendix for more details), we retrained the models on the training and validation sets combined. We then evaluated them on the test set in the same way we conducted the synthetic experiments.

\subsubsection{Results}

Table \ref{tab:safi} compares the performance of the different ensemble methods with the same metrics described previously.

Adaptive ridge consistently provides the best performance across all metrics, improving over the best model in hindsight by 8\% in MAE, 17\% in RMSE, 7\% in MAPE, 26\% in CVaR 5\%, and 14\% in CVaR 15\%. The adaptive ridge can substantially reduce the worst-case errors, which is critical for the success of the pollution management pipeline.  

In comparison, the other ensemble methods fail to outperform the best model in hindsight consistently. As expected, Exp3 provides comparable performance to the best model in hindsight: 0.510 vs. 0.507 in MAE, 0.690 vs. 0.756 in RMSE, and more robust results: 1.87 vs. 2.27 in CVaR 5\%, 1.36 vs. 1.46 in CVaR 15\%.

Adaptive ridge outperforms ridge substantially on all metrics, notably the robustness, by 17\% in CVaR 5\% and 12\% in CVaR 15\%, which is of substantial interest to the plant operators to prepare better against incoming dangerous weather conditions.

\begin{table}
\caption{Results for each metric and ensemble method on the test set for the wind speed forecasting task on the Safi site. Results are given in km/h.}
\centering
\begin{tabular}{|l|c|c|c|c|c|}
\hline \textbf{Ensemble Method} & \textbf{MAE} & \textbf{RMSE} & \textbf{MAPE} (\%) & \textbf{CVaR 5\%} & \textbf{CVaR 15\%} \\
\hline Best Model in Hindsight & $0.507$ & $0.756$ & $15.9$ & $2.27$ & $1.46$ \\
Ensemble Mean & $0.553$ & $0.783$ & $17.4$ & $2.27$ & $1.52$ \\
Exp3 & $0.510$ & $0.690$ & $16.5$ & $1.87$ & $1.36$ \\
Passive-Aggressive & $0.643$ & $0.868$ & $20.1$ & $2.35$ & $1.72$ \\
Ridge & $0.521$ & $0.718$ & $16.7$ & $2.03$ & $1.42$ \\
Adaptive Ridge & $\textbf{0.469}$ & $\textbf{0.626}$ & $\textbf{14.8}$ & $\textbf{1.68}$ & $\textbf{1.25}$ \\
\hline
\end{tabular}\label{tab:safi}
\end{table}

\subsubsection{Remarks on the adaptive coefficients}

The ability of regression coefficients to adapt over time is demonstrated in Figure \ref{fig:evolution}, which plots a subset of these values over a period of two weeks. The rate of change for the coefficients varies among models, with some displaying more significant shifts than others. Furthermore, as depicted in Figure \ref{fig:coefs}, the coefficients can take on both positive and negative values. A daily cyclical pattern is also observed, which aligns with the cyclical nature of the forecast errors (due itself to the cyclical nature of the weather data).

It is worth noting that the predictions of some models may compensate for one another, particularly when there is a high degree of correlation among them. In such cases, practitioners may want to consider selecting a subset of models before utilizing an adaptive ensemble, and potentially impose an additional constraint that the sum of their coefficients is equal to 1, depending on the specific application.

\begin{figure}[!h] 
\centering
    \includegraphics[width=1\linewidth]{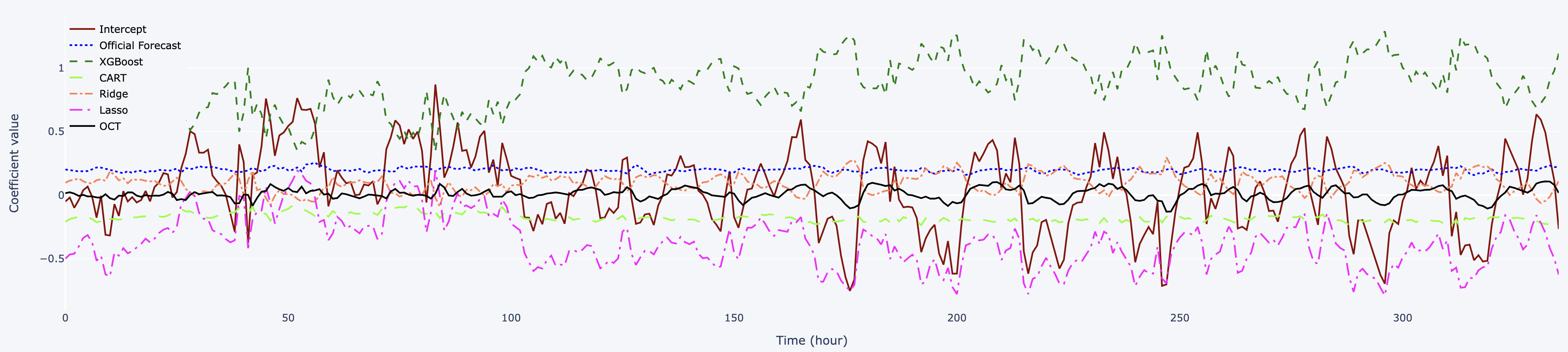}
    \caption{Evolution every hour of the regression coefficients values for each ensemble member.}
    \label{fig:evolution}
\end{figure}

\begin{figure}[h] 
\centering
\includegraphics[width=1\linewidth]{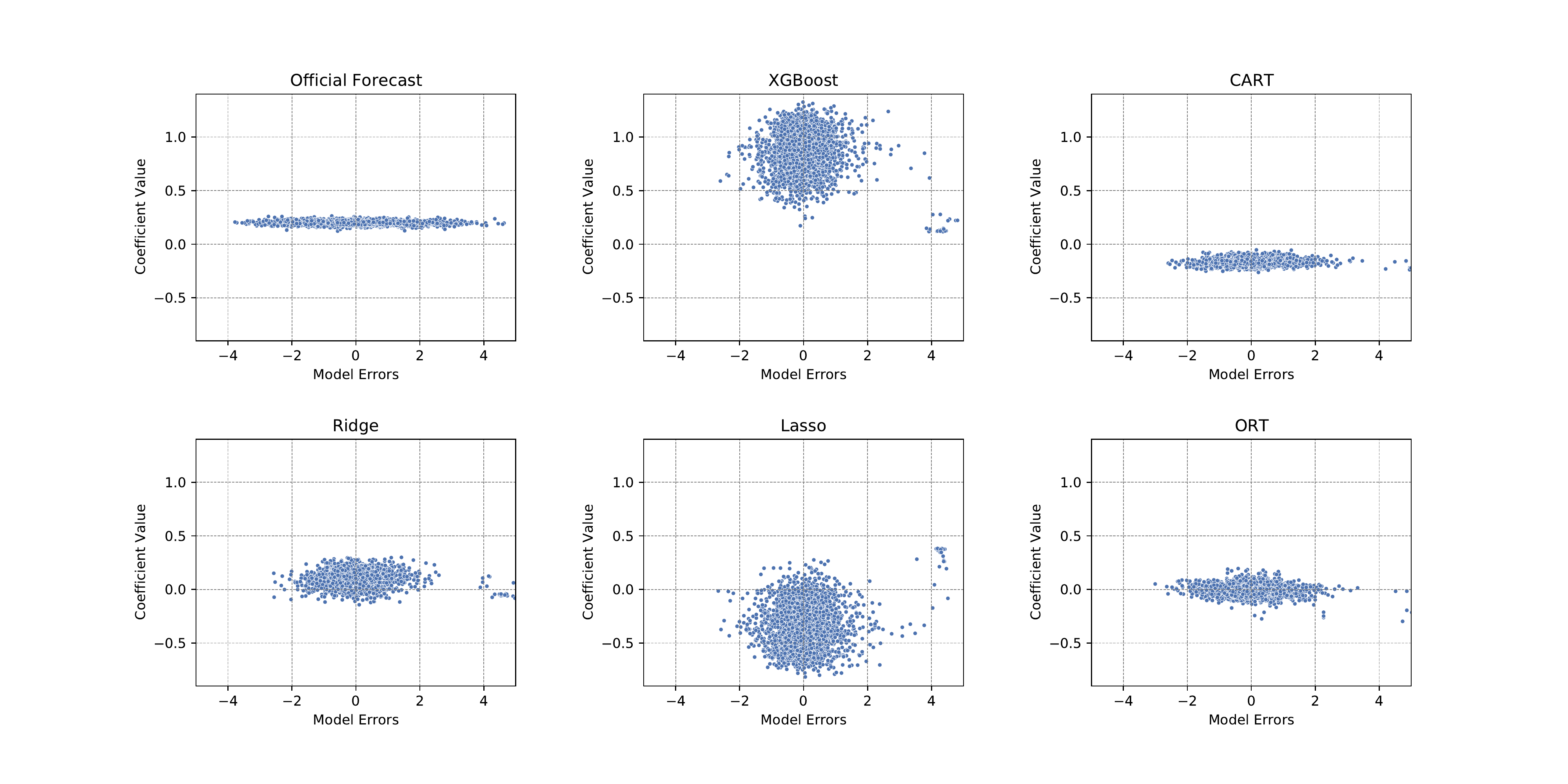}
    \caption{Coefficient values of each ensemble member with respect to their errors.}
    \label{fig:coefs}
\end{figure}

\subsection{Energy Consumption Forecasting}

Due to fast urbanization, rising population, and increased social needs, the energy demand in the building sector has expanded dramatically over the past few decades. In particular, appliances in residential buildings represent a substantial part of the electrical energy demand, approximating 30\% \citep{energydemand}. Numerous studies investigated appliances' energy use in buildings and developed forecasting models of electrical energy consumption in buildings to guide multiple applications such as detecting abnormal energy use patterns, determining adequate sizing of alternate energy sources and energy storage to diminish power flow into the grid, informing energy management system for load control, demand-side management and demand-side response, predicting electricity price (e.g., \cite{pham, oluajayi, zaffran}). According to \cite{saving}, the availability of a building energy system with precise forecasting could save between 10\% and 30\% of all energy consumed. Overall, as accurate and robust forecasting can be impactful, we investigate how adaptive ridge compares with the other ensemble techniques using real-world data on appliance energy consumption provided by \cite{candanedo}.

\subsubsection{Data}

The dataset is publicly available at the University of California Irvine (UCI) Machine Learning repository\footnote{ \url{https://archive.ics.uci.edu/ml/datasets/Appliances+energy+prediction}}. First, we trained and selected our own forecasting models to constitute the ensemble members before comparing the different ensemble methods.

Each observation in the dataset is a vector of measurements made by a wireless sensor network in a low-energy building. The predictive features we use include the temperature and humidity conditions in various rooms in the building, the weather conditions in the nearest weather station, and the date and time. Our goal is to predict the total energy consumption in Wh of the building’s appliances within one hour at each time step. Measurements are taken every 10 minutes over 4.5 months. We aggregated the current features with the past 5 time steps of data for each forecasting case to form a vector of 168 features used as predictors for standard machine learning models.
We trained several forecasting models that will constitute our ensemble members. We separated the data chronologically into a training set (50\%, 9858 samples), validation set 1 (25\%, 4929 samples), validation set 2 (10\%, 1973 samples), and test set (15\%, 2956 samples).

We used the Python package Lazy Predict\footnote{\url{https://lazypredict.readthedocs.io/}} to conveniently train and cross-validate 36 different forecasting models using the training set only. Models trained included numerous tree-based models, linear regressions, support vector machines, neural networks, and various other methods. Among the 30 models trained, we selected the best 10 with respect to MAE on the validation set 1 to constitute our final ensemble members. The best 10 models were Ordinary Least Squares (OLS), Ridge, Bayesian Ridge, ElasticNet, Huber Regressor, Least Angle Regression (LARS), Lasso, LassoLARS, Linear Support Vector Regression, and Orthogonal Matching Pursuit.

We collected the predictions of these different models on the validation sets and test sets and used validation set 1 as training data for the ensembles, validation set 2 as validation data for the ensembles, and the test set to evaluate the results.

\subsubsection{Results}

The results presented in Table \ref{tab:energy} demonstrate the superiority of the adaptive ridge method compared to other ensemble techniques. It is the only ensemble method that outperformed the best model in hindsight in MAE and RMSE, showing a 15\% improvement in MAE and a 26\% improvement in RMSE. The MAPE performance of adaptive ridge was similar to the best ensemble member in hindsight (23.5\% vs. 23.2\%), while the second-best ensemble method, Exp3, achieved a MAPE of 27.6\%.

The energy consumption target values in this dataset displayed significant bursts, as shown previously in Figure \ref{fig:ts_examples}, making it difficult for all ensemble methods to accurately capture such events, as evidenced by the poor CVaR scores. However, the adaptive ridge method demonstrated its ability to effectively leverage the errors of each ensemble member, resulting in more robust solutions than any other method. The improvement in CVaR 5\% was 27\% and 28\% in CVaR 15\% compared to the best ensemble member, while ridge showed an improvement of 7\% in CVaR 5\% and 8\% in CVaR 15\%.


\begin{table}[]
    \centering
    
    \caption{Results for each metric and ensemble method on the test set for the energy consumption task. Results are given in Wh.}
    \label{tab:energy}

\begin{tabular}{|l|c|c|c|c|c|}
\hline \textbf{Ensemble Method} & \textbf{MAE} & \textbf{RMSE} & \textbf{MAPE} (\%) & \textbf{CVaR 5\%} & \textbf{CVaR 15\%} \\
\hline Best Model in Hindsight & $31.1$ & $80.0$ & $\mathbf{2 3 . 2}$ & $309.0$ & $158.3$ \\
Ensemble Mean & $33.3$ & $79.4$ & $27.6$ & $304.3$ & $157.6$ \\
 Exp3 & $32.9$ & $78.8$ & $27.1$ & $301.9$ & $156.4$ \\
 Passive-Aggressive & $52.2$ & $149.1$ & $45.0$ & $476.7$ & $249.8$ \\
 Ridge & $35.3$ & $76.4$ & $33.7$ & $287.5$ & $145.0$ \\
 Adaptive Ridge & $\mathbf{2 6 . 3}$ & $\textbf{59.4}$ & $23.5$ & $\mathbf{2 2 4 . 9}$ & $\mathbf{1 1 3 . 8}$ \\
\hline
\end{tabular}
\end{table}

\subsection{Tropical Cyclone Intensity Forecasting}

\paragraph*{Motivation} 

Tropical cyclones are a considerable threat to communities, causing hundreds of deaths and billions of dollars in damage each year. The US National Hurricane Center (NHC) forecasts tropical cyclones' track, intensity, size, structure, storm surges, rainfall, and tornadoes. Intensity predictions, i.e., the maximum sustained wind speed of a storm over a 1-minute interval, is one of the most important and difficult characteristics to forecast and directly impacts decision-making. In this real-world use case, we evaluate the effectiveness of the different ensemble methods for 24-hour lead time tropical cyclone intensity forecasting, which is critical to undertake life-saving measures and mitigate the impact of these devastating natural disasters.

\paragraph*{Background on the Ensemble Members} Current operational TC forecasts used by the NHC can be classified into dynamical models, statistical models, and statistical-dynamical models \citep{nhc2020doc}. Dynamical models, also known as numerical models, utilize powerful supercomputers to simulate atmospheric fields' evolution using dynamical and thermodynamical equations \citep{hwrf2018doc, ecmwf2019doc}. 
Statistical models approximate historical relationships between storm behavior and storm-specific features and, in general, do not explicitly consider the physical process \citep{cliper5, SHIFOR5}.
Statistical-dynamical models use statistical techniques but further include atmospheric variables provided by dynamical models \citep{shipsdemaria}. Lastly, the NHC employs consensus models (i.e., ensemble models) to combine individual operational forecasts \citep{nhc2020doc, nhc}. In addition, recent developments in deep learning enabled machine learning models to employ multiple data processing techniques to process and combine information from a wide range of sources and create sophisticated architectures to model spatial-temporal relationships. In particular, \cite{boussioux} proposed a deep feature extractor methodology combined with boosted tree methods that have comparable performance with operational forecasts.

\subsubsection{Data}

We collected the different operational and machine learning intensity forecasts provided by \cite{boussioux}\footnote{ \url{https://github.com/leobix/hurricast}}. We used the forecasts made on their test set, corresponding to the years 2016-2019. They obtained operational forecast data from the Automated Tropical Cyclone Forecasting (ATCF) data set maintained by the NHC \citep{atcf2, atcf}. The ATCF data contains historical forecasts by operational models used by the NHC for its official forecasting of tropical and subtropical cyclones in the North Atlantic and Eastern Pacific basins. 

Table \ref{tab:methods} lists the 7 forecast models included as ensemble members and the 2 operational benchmark models and summarizes the predictive methodologies employed.

Overall, our ensemble members comprise four deep learning models (Hurricast), one statistical-dynamical model (Decay-SHIPS), and two dynamical models (GFSO, HWRF). We benchmark our ensemble methods against the forecasts made by FSSE and OFCL, which are the highest-performing consensus models used by the NHC. The technical details of all the models can be found in \cite{nhc2020doc} and \cite{boussioux}.




\begin{table*}

\caption{Summary of all tropical cyclone intensity forecasting models used as ensemble members or benchmark.}
\centering
\resizebox{\textwidth}{!}{%
$\begin{array}{|l|l|l|l|}\hline \text { \textbf{Model ID} } & \text { \textbf{Model name} } & \text { \textbf{Model type} } & \text { \textbf{Used as} } \\ \hline
\text { HUML-(stat, xgb) } & \text { Hurricast (tabular data) } & \text { Gradient-boosted trees } & \text { Ensemble member }\\ 
\text { HUML-(stat/viz, xgb/td) } & \text { Hurricast (tabular and vision data) } & \text { Feature extraction with tensor decomposition  } & \text { Ensemble member }\\ 
&&\quad \text{+ Gradient-boosted trees}&\\

\text { HUML-(stat/viz, xgb/cnn/gru) } & \text { Hurricast (tabular and vision data) } & \text { Feature extraction with deep learning   } & \text { Ensemble member }\\
&& \quad \text{(CNN, GRU) + Gradient-boosted trees } &\\

\text { HUML-(stat/viz, xgb/cnn/transfo) } & \text { Hurricast (tabular and vision data) } & \text { Feature extraction with deep learning}& \text { Ensemble member } \\
&& \quad \text{(CNN, Transformers) + Gradient-boosted trees } &\\
\hline 
\text { Decay-SHIPS } & \text { Decay Statistical Hurricane Intensity } & \text { Statistical-dynamical } & \text { Ensemble member } \\ & \quad \text { Prediction Scheme } & & \\ 
\text { GFSO } & \text { Global Forecast System model } & \text { Multi-layer global dynamical } & \text { Ensemble member } \\ 
\text { HWRF } & \text { Hurricane Weather Research and } & \text { Multi-layer regional dynamical } & \text { Ensemble member } \\ & \quad \text { Forecasting model } & & \\ \hline \text { FSSE } & \text { Florida State Super Ensemble } & \text { Corrected consensus } & \text { Benchmark model } \\ \text { OFCL } & \text { Official NHC Forecast } & \text { Consensus } & \text { Benchmark model } \\ \hline\end{array}$}
\label{tab:methods}
\end{table*}

\subsubsection{Experiments Protocol}

\paragraph*{Implementation Adjustments} The use-case of tropical cyclone intensity forecasting requires two minor implementation adjustments. First, since we use the different forecasts for lead time 24-hour every 6 hours, this means at a time step $t$ we don't predict $t+1$, but rather $t+4$ ($t+24h$ instead of $t+6h$). Therefore, for all methods, we only use the ground truth data that would be available in real-time to update the weights, i.e., to predict $t+4$, we do not use the information that becomes available at $t+1, t+2, t+3$. 




Second, tropical cyclones are temporary events: the forecast data comes as several time series of different lengths instead of only one.
For the adaptive ridge method, we treat all samples from each hurricane as independent samples in the training set and we build $\mathbf{Z}_t$ as usual. 
For the bandit and PA method, for each new tropical cyclone, we reuse the latest updated weights computed on the previous tropical cyclone.

\paragraph*{Training, Validation, Test Splits}

The forecast data comprises 870 samples for the North Atlantic Basin and 849 for the Eastern Pacific basin.
We split the data chronologically into training (50\% of the data), validation (20\% of the data), and test (30\% of the data) sets. 

\subsubsection{Results}

Table \ref{tab:hurricanes} compares the different ensemble methods. The adaptive ridge and ridge methods outperform all other ensembles, including the current operational models OFCL and FSSE, which is of significant interest for operational use due to its potential to improve the official forecasts issued by the NHC. In particular, the good performance of adaptive ridge in terms of CVaR is particularly relevant for tropical cyclone forecasting as it may decrease the worst errors, which are the ones leading to poor decision-making on the ground. For instance, adaptive ridge outperforms the official forecast OFCL by 5\% (resp. 21\%) on the North Atlantic (resp. Eastern Pacific) basin in CVaR 15\% and by 12\% (resp. 17\%) in MAE. 

The Ensemble Mean and Exp3 methods perform similarly to the operational consensus FSSE and OFCL and generally improve slightly over the best model in hindsight. Adaptive ridge has the best performance overall across basins and metrics, with only ridge slightly surpassing it in CVaR on the North Atlantic basin. The adaptive component provides a valuable performance boost compared to ridge only, such as an MAE gain of 4\% on the North Atlantic basin and 9\% on the Eastern Pacific basin.

\begin{table*}[!h]
\centering 
\caption{Results in knots for each metric and ensemble method for 24-hour lead time intensity forecasting task. Bold values highlight the best performance for each metric.}

\resizebox{\textwidth}{!}{\begin{tabular}{|c|c|c|c|c|c|c|c|c|c|c|}
\hline  & \multicolumn{5}{|c|}{ North Atlantic Basin  } & \multicolumn{5}{|c|}{ Eastern Pacific Basin }\\ \textbf{Ensemble Method} & \multicolumn{5}{|c|}{ Comparison on 261 cases }& \multicolumn{5}{|c|}{ Comparison on 254 cases } \\ & \textbf{MAE} & \textbf{RMSE} & \textbf{MAPE} (\%) & \textbf{CVaR 5\%} & \textbf{CVar 15\%} & \textbf{MAE} & \textbf{RMSE} & \textbf{MAPE} (\%) & \textbf{CVaR 5\%} & \textbf{CVaR 15\%} \\
\hline Best Model in Hindsight	& 9.2	& 12.2	& 14.2	& 31.4	& 24.6	& 11.3	& 15.3	& 14.5	& 41.1	& 30.8\\
Ensemble Mean	& 9.4	& 12.8 &	14.0&	35.2	&25.6&	10.9&	14.6&	14.2	&38.2&	29.2\\
Exp3 	&8.8	&11.5&	13.3&	30.0&	22.6&	10.3&	14.9&	15.5&	43.3&	31.1\\
Passive-Aggressive&	9.1	&12.4&	13.9&	34.6&	25.3&  	17.9&	22.9&	27.1&	57.2&	44.4\\
FSSE&	8.8	& 11.6&	13.8&	31.0&	23.2&	10.0&	14.4&	15.5&	41.3&	28.9\\
OFCL&	8.4&	11.4&	13.3&	29.8	&22.2	&10.6&	15.6&	17.2&	44.3&	30.6\\
Ridge&	7.7	& 10.3&	11.7&	\textbf{27.2}&	\textbf{21.0}&	9.7&	13.6&	15.0&	39.2&	27.8\\
Adaptive Ridge&	\textbf{7.4}&	\textbf{10.2}&	\textbf{11.6}&	28.6&	21.1&	\textbf{8.8}&	\textbf{12.1}&	\textbf{13.4}&	\textbf{34.1}&	\textbf{24.1}\\
\hline
\end{tabular}}
\label{tab:hurricanes}
\end{table*}

\subsection{Further Applications to Adaptive Feature Selection}

Beyond ensemble modeling, our methodology can be employed for adaptive feature selection in complex multimodal tasks. For instance, in healthcare, the methods developed by \cite{haim, bertsimas2022tabtext} merge modalities such as tables, text, time series, and tables by extracting features through deep learning, forming a unified vector representation. This vector is then employed to generate forecasts using standard machine-learning algorithms. However, the weighting of these features remains static over time. We can dynamically identify the most relevant features for predicting patient outcomes, disease progression, or treatment responses by incorporating adaptive feature selection. Additional applications of adaptive multimodal feature selection encompass natural disaster management \citep{zeng2023global}, climate change, agriculture, finance and economics, and many others.

\section{Conclusion}

Using an ARO approach, this paper presented a novel methodology for building robust ensembles of time series forecasting models. Our technique is based on a linear ensemble framework, where the weights of the ensemble members are adaptively adjusted over time based on the latest errors. This approach robustly adapts to model drift using an affine decision rule and a min-max problem formulation transformed into a tractable form with equivalence theorems.

We have demonstrated the effectiveness of our adaptive ensemble mechanism through a series of synthetic and real-world experiments. Our results have shown that our approach outperforms several standard ensemble methods in terms of accuracy and robustness. We have also applied our technique to three real-world use cases -- weather forecasting for air pollution management, energy consumption forecasting, and tropical cyclone intensity forecasting -- highlighting its merit for critical tasks where robustness and risk minimization are critical.

One of the advantages of our method is that it relies on a robust linear formulation, which makes it valuable from a practical standpoint for ease of deployment and interpretability. However, this method also requires the availability of a training set of some size, a trade-off we exhibited with synthetic experiments. Therefore, we recommend using our adaptive ensemble framework when a few hundred historical time series points are available for training.

Adaptive ensembles have the potential for many time series applications, including epidemiological predictions, healthcare operations, logistics, supply chain, manufacturing, finance, and product demand forecasting. 



\section*{Acknowledgements}

We thank Shuvomoy Das Gupta, Amine Bennouna, Moïse Blanchard for useful discussions.
The authors acknowledge the MIT SuperCloud and Lincoln Laboratory Supercomputing Center for providing high-performance computing resources that have contributed to the research results reported within this paper.

\newpage
\bibliography{References}
\appendix

\section{Additional Details on Data Experiments}

\subsection{Evaluation of the Training Time with Synthetic Data}

We provide the computational time of adaptive ridge when we vary the training data size (Table \ref{tab:training_time}), the number of past time steps to use for the adaptive rule (Table \ref{tab:past}), and the number of ensemble members available (Table \ref{tab:number}). Overall, the method remains tractable with typical values of those hyperparameters.

\begin{table}[h!]
\caption{Average training time across 30 seeds of adaptive ridge with respect to the number of samples used in the training set, when we fix the number of ensemble members to $m = 10$ and the number of past time steps to use to $\tau = 5$.}
$$\begin{tabular}{|c|c|}
\hline \textbf{Number of samples in training set} & \textbf{Average time} (s) \\
\hline 100 & $0.4$ \\
 200 & $0.6$ \\
 400 & $1.3$ \\
 750 & 64 \\
 1500 & 92 \\
 3000 & 239 \\
\hline
\end{tabular}$$\label{tab:training_time}
\end{table}

\begin{table}[h!]
\caption{Average training time across 30 seeds of adaptive ridge with respect to the number of past time steps to use, when we fix the number of ensemble members to $m = 10$ and the number of training samples to $N = 3000$.}
$$\begin{array}{|c|c|}
\hline \textbf { Number of time steps } & \textbf { Average time (s) } \\
\hline 2 & 54 \\
 3 & 100 \\
 5 & 239 \\
 10 & 603 \\
 15 & 1079 \\
 25 & 1361 \\
\hline
\end{array}$$\label{tab:past}
\end{table}

\begin{table}[h!]
\caption{Average training time across 30 seeds of adaptive ridge with respect to the number of ensemble members to use, when we fix the number of past time steps to $\tau = 5$ and the number of training samples to $N = 3000$.}
$$\begin{array}{|c|c|}
\hline \text { \textbf{Number of ensemble members} } & \text { \textbf{Average time} (s) } \\
\hline 3 & 7 \\
 6 & 51 \\
 10 & 239 \\
 15 & 236 \\
 25 & 236 \\
 50 & 250 \\
\hline
\end{array}$$\label{tab:number}
\end{table}

\pagebreak

\subsection{Hyperparameter Tuning in Real-World Experiments}

For all real-world experiments, we tuned the following hyperparameters: 
\begin{itemize}
    \item regularization factor for Ridge, Adaptive Ridge, and Passive-Aggressive with the values $\lambda, \epsilon_{PA} \in \{0, 1e-4, 1e-3, 1e-2, 1e-1, 1, 2\}$
    \item window size of past data to use: $\tau \in [1,10]$.
\end{itemize}

\subsection{Synthetic Data Hyperparameters}

Table \ref{tab:features} summarizes the different hyperparameters involved in our synthetic experiments.

\begin{table*}

\caption{
List of features included in our statistical data with their fixed value or range when they vary.} \label{tab:features}

\includegraphics[width=1.14\textwidth]{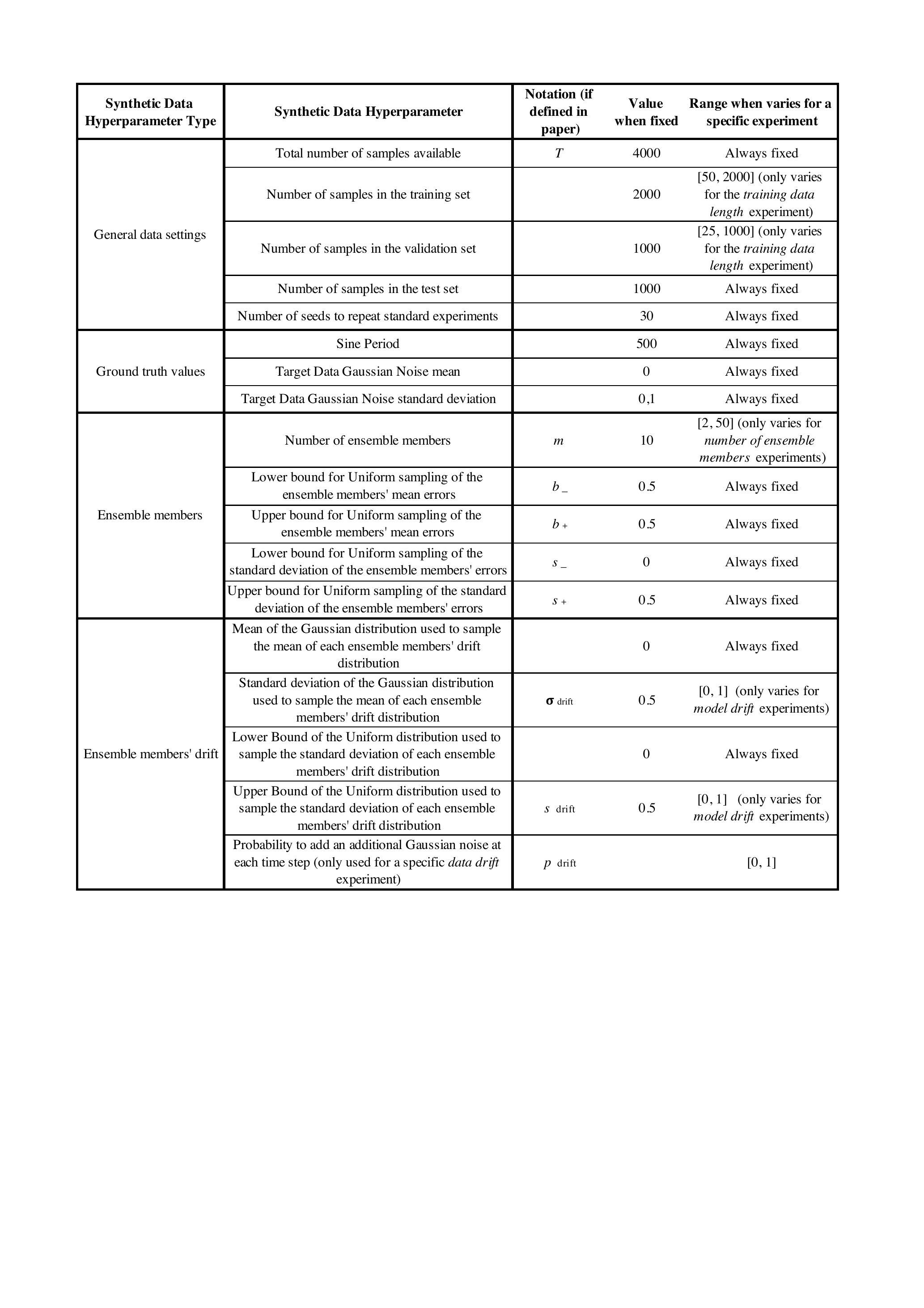}
\end{table*}

\end{document}